\newcommand{\R}{\mathbb{R}}
\newcommand{\N}{\mathcal{N}}
\newcommand{\Nlf}{\mathcal{N}_{\text{leaf}}}
\newcommand{\Nint}{\mathcal{N}_{\text{int}}}
\newcommand{\X}{\mathcal{X}}
\newcommand{\tp}{^{\mbox\tiny\mathsf{T}}}
\newcommand{\Frob}{\rm F}
\newcommand{\T}{\mathcal{T}}
\newcommand{\Q}{\mathcal{Q}}
\def\beq{\begin{equation*}}
\def\eeq{\end{equation*}}
\def\bql{\begin{equation}}
\def\eql{\end{equation}}
\def\bqn{\begin{eqnarray*}}
\def\eqn{\end{eqnarray*}}
\def\bnl{\begin{eqnarray}}
\def\enl{\end{eqnarray}}
\def\bna{\bql\begin{array}{rcl}}
\def\ena{\end{array}\eql}
\def\bnn{\beq\begin{array}{rcl}}
\def\enn{\end{array}\eeq}
\def\bma{\begin{bmatrix}}
\def\ema{\end{bmatrix}}
\def\bmx{\begin{matrix}}
\def\emx{\end{matrix}}
\def\ben{\begin{enumerate}}
\def\een{\end{enumerate}}
\def\bit{\begin{itemize}}
\def\eit{\end{itemize}}
\def\bei{\begin{itemize}}
\def\eei{\end{itemize}}
\def\bet{\begin{tabular}}
\def\eet{\end{tabular}}
\renewcommand{\Re}{\mathbb{R}}
\newtheorem{proposition}{Proposition}
\title{Communication-aware Hierarchical Map Compression of Time-Varying Environments for Mobile Robots}
\author{Daniel T. Larsson and Dipankar Maity 
\thanks{
D. Larsson is an Assistant Professor in the Aerospace and Mechanical Engineering Department at the University of Arizona in Tucson, AZ, USA. dlarsson@arizona.edu
}
\thanks{D. Maity is with the Department of Electrical and Computer Engineering and an affiliated faculty of the North Carolina Battery Complexity, Autonomous Vehicle, and Electrification Research Center (BATT CAVE), University of North Carolina at Charlotte,  NC, 28223, USA. { dmaity@charlotte.edu}}
}
\begin{document}

\maketitle
\thispagestyle{empty}

\begin{abstract}
In this paper, we develop a systematic framework for the time-sequential compression of dynamic probabilistic occupancy grids.
Our approach leverages ideas from signal compression theory to formulate an optimization problem that searches for a multi-resolution hierarchical encoder that balances the quality of the compressed map (distortion) with its description size, the latter of which relates to the bandwidth required to reliably transmit the map to other agents or to store map estimates in on-board memory.
The resulting optimization problem allows for multi-resolution map compressions to be obtained that satisfy available communication or memory resources, and does not require knowledge of the occupancy map dynamics. 
We develop an algorithm to solve our problem, and demonstrate the utility of the proposed framework in simulation on both static (i.e., non-time varying) and dynamic (time-varying) occupancy maps.
\end{abstract}

\section{Introduction}

\IEEEPARstart{R}{ecent} times have witnessed an explosion of capable and intelligent autonomous systems.
These systems have transformed applications such as supply-chain management, search and rescue, and multi-agent mapping and exploration.
Many of the aforementioned applications are performed by a coalition of heterogeneous autonomous systems with various on-board information-processing resources (memory, sensors, hardware, etc.) that share data.
Moreover, the diverse set of applications has given rise to the development of a wide range of system architectures, each with its own set of on-board resources (e.g., sensors, hardware, etc.).
As a result, there is a need for frameworks that allow inter-agent information exchange that takes into account system resources.

One application of particular relevance to our work is that of multi-agent exploration, where a team of robots explore an unknown environment together while sharing information for a common task.
Although multi-agent exploration may take many forms, let us consider the multi-agent exploration setting where two autonomous agents are collaboratively attempting to navigate an unknown environment, akin to the scenario considered in~\cite{psomiadis2024communication}.
In this setting, which contains both exploration and planning problems, two agents, one termed a ``supporter" (e.g., aerial or ground vehicle) works in together with a ground-based agent, termed a ``seeker",  to ensure that the ground-based vehicle safely navigates to its goal position. 
In this scenario, the supporter has the task of exploring the unknown portions of the environment for the other, and to transmit its findings via wireless communication means.
The navigating agent then incorporates the information it receives (e.g., quantized occupancy probabilities of an occupancy grid) into its own world representation, which is subsequently employed to update its navigation (e.g., path) plan.
It is important to note, however, that the wireless connection between the two agents in the real-world is capacity-limited, and thus agents must be selective regarding the (occupancy) information they choose to transmit. 
Motivated by such scenarios, it is our goal in this paper to develop~a map compression framework that compresses probabilistic occupancy grid structures in a systematic manner and allows for communication constraints to be imposed in~a~rigorous~way.

The compression of probabilistic occupancy maps has been studied for some time within the autonomous systems community. 
Given the number of studies that have considered the generation of abstractions, or map compressions, for autonomous systems, it comes as no surprise that existing literature contains a wide breath of frameworks that each vary in scope and objective.
The variations in existing work lie primarily in their assumptions regarding the map encoder, or rather, the structure the framework places on the type of map compression achievable.
For example, in~\cite{Cowlagi2012,Cowlagi2008,Cowlagi2007}, the authors employ wavelet decompositions of occupancy grid maps to generate multi-scale/multi-resolution representations of the operating environment.
The wavelet approach is powerful in that it places minimal assumptions on the structure of the multi-resolution abstractions/compressions, but a notable drawback is that it relies on ad-hoc rules to select the wavelet coefficients, which are oftentimes chosen to be functions that depend on the location of the autonomous system.
As a result, wavelet-based frameworks do not generate map representations that are guaranteed to satisfy communication constraints imposed on the system.
In a related line of work, the authors of~\cite{kraetzschmar2004probabilistic} considered developing occupancy-map compressions by introducing the probabilistic quadtrees (PQTs) framework, which forms occupancy-grid compressions via multi-resolution hierarchical quatree representations.
The benefit of employing the PQTs framework is that it is simple to implement and represents map information in a compact (hierarchical) data structure, but, much like the wavelet-based approaches, pruning is done in a makeshift manner by, for example, removing nodes in the tree whose children have similar occupancy values, as measured by their variance.
Therefore, while PQT tree pruning can be done quickly, the pruning rules leave system designers to trial and error to determine pruning thresholds that allow their intelligent systems to function within operating constraints---a scenario significantly exacerbated when occupancy information changes as a function of time.
Despite their drawbacks, PQT and wavelet-based methods have been used as part of frameworks that aim to reduce the complexity of graph-search algorithms via grid compression~\cite{hauer2015multi,hauer2016reduced,Tsiotras2007,tsiotras_multiresolution_2012,bakolas2008multiresolution}.
More recently, frameworks for map compression that do not rely on user-provided rules have been developed.
In~\cite{larsson2020q}, the authors employ ideas from information-theoretic signal compression to formulate an optimization problem whose solution is a multi-resolution probabilistic (quad)tree.
In contrast to wavelet and the PQT method, the approach in~\cite{larsson2020q}, inspired by the information-bottleneck (IB) method~\cite{tishby2000IBmethod}, creates multi-resolution representations that arise from an optimization problem that maximizes information retention while reducing the size of the representation.
Since the framework in~\cite{larsson2020q} is rooted in information theory, the approach has explicit connections with channel capacity~\cite{thomas2006elements} and is thus readily amenable for the creation of maps that meet communication-constraints. 
Extensions of~\cite{larsson2020q} include~\cite{larsson2021information} and~\cite{larsson2023linear} where memory (or~communication) constraints are imposed as hard-constraints, and~\cite{larsson2022generalized} where a framework for compressing maps that contain semantic information is presented.
Information-theoretic approaches for map compression have also been employed to reduce the computational complexity of graph-based planning and exploration for autonomous systems~\cite{Nelson2015,larsson2021information_b}.
It is important to note that the aforementioned works do not consider map-compression in the case of time-varying map information, nor do they consider what a receiver of compressed map information should do.
To this end, the authors of~\cite{psomiadis2024communication,psomiadis2024multi} consider a two agent \textit{seeker-supporter problem}, where one agent (the supporter) is tasked with assisting another (the seeker) with navigating in an unknown environment.
In their setting, the seeker receives quantized environment information from a mobile supporter, which the seeker then incorporates with its own perception/sensing of the environment to generate an updated estimate of the world. 
Although these works do consider the decoding problem, the encoder employed by the supporter is ``templated" in that the supporter may only choose a map compression structure from a library of possible alternatives, provided by the system designer beforehand, when deciding how to create the map abstraction.
Consequently, while~\cite{psomiadis2024communication,psomiadis2024multi} enable single-agent path planning with information stemming from a secondary source, the compression problem considered is rather limited in flexibility.
In contrast, wavelet, PQT, and the information theoretic approaches discussed, do not~suffer~from~this~limitation.
It is clear from surveying existing literature that a significant challenge is to formulate a problem whose solution can be tractably found via the design of a suitable algorithm.
To this end, the contribution of this paper is the development of a framework that considers both encoding \emph{and} decoding aspects of \emph{time-varying} probabilistic occupancy grid compression where communication constraints may be rigorously enforced.
To achieve our goal, we impose that our encoder correspond to that of a multi-resolution hierarchical tree (e.g., quadtree), which requires minimal information from system designers.
By enforcing a hierarchical encoder structure, we strike a balance between complexity and diversity of expression of the map compressions achievable. 
The remainder of the paper is organized as follows.
In Section~\ref{sec:probFormul} we introduce the general resource-constrained probabilistic map-compression problem, which does not impose any structure on the encoder or decoder.
Then, in Section~\ref{sec:HierarchMapCompProblem}, we present an approach to approximate the general problem in Section~\ref{sec:probFormul} by leveraging multi-resolution hierarchical tree map encoders.
Section~\ref{sec:solnApproach} considers the development of an algorithm, which leverages integer (linear) programming, to solve the problem in Section~\ref{sec:HierarchMapCompProblem}.
The utility of our approach to compress and decode both static (non-time-varying) and dynamic (time-varying) probabilistic occupancy grids is shown in Section~\ref{sec:ResultsDiscuss}, before concluding remarks in Section~\ref{sec:conclusion}. 
%

\section{Problem Formulation}\label{sec:probFormul}

To formalize our problem, we assume that the operating environment ($\X$) at time $k\in\mathbb{N}$ is represented by an $n\times n$ dimensional probabilistic occupancy grid $X_k \in [0,1]^{n \times n}$.
We make no assumptions about the distribution of $X_k$ or its temporal evolution, and assume the former is unknown for all time $k$.
Note that, in light of our assumptions, the distribution over $X_k$ (i.e., the environment) may evolve in an arbitrary fashion due to, for example, dynamic obstacles (humans, robots, cars, etc.).  
To illustrate in more quantifiable terms the need for map compression, we return to the seeker-supporter multi-agent scenario previously discussed, where an agent is to communicate its occupancy grid representation with another autonomous vehicle.
If we assume that each occupancy cell $[X_k]_{ij}$, $i,j\in\{1,\ldots,n\}$, of the map requires $b$ bits to transmit (i.e., to represent the occupancy information, location and size), then transmitting the entire map requires $b n^2$ number of bits.
For large environments where $n\gg 1$, such a transmission will require a significant amount of communication bandwidth.\footnote{For example, an unmanned air vehicle (UAV) sending an occupancy grid of size $1024 \times 1024$ at a rate of 30fps with $b = 8$ requires 240 MB/sec data-rate.}
Consequently, simply reducing the number of bits $b$ for representing the occupancy values---which is what typically done in communication and information theory---is not a viable solution as the required bandwidth will be dominated by the number of cells (i.e., $n^2$). 
Therefore, the problem we consider will aim to efficiently compress the occupancy grid by reducing the number of cells in the representation in a systematic manner while considering the inherent temporal evolution of environment maps in robotic systems applications.
This approach distinguishes our problem and the resulting solution from a typical compression problem in communication theory.
To pose a time-sequential map compression problem that allows us to trade map size and map quality, we will appeal to ideas from traditional signal compression theory.
To this end, we will pose our problem by formulating an optimization problem whose solution is a compressed occupancy grid satisfying the imposed (communication) constraints.
To do so, we must specify how we will quantify both the quality of the compressed representation, generally done with a distortion measure that is to be minimized, and the size of the resulting compressed map.  
With these considerations in mind, the time-sequential map compression problem may be formulated as
\begin{equation}\label{eq:mainOrigObjectiveFunc}
    \min_{f_{e,k},f_{d,k}} ~\lVert X_k - f_{d,k}(f_{e,k}(X_k))\rVert_{\Frob},
\end{equation}
\begin{align}
\text{subject to} \hspace{2.5cm}
	B(f_{e,k}(X_k)) &\leq b_k, \hspace{1.5cm} \label{eq:mainOrigConstraint1}\\
	0 \leq f_{d,k}(f_{e,k}(X_k)) &\leq 1,  \label{eq:mainOrigConstraint2}
\end{align}
where $X_k$ is the observed (probabilistic) occupancy grid, $f_{e,k}$ and $f_{d,k}$ are the encoding and decoding functions, respectively, $B(\cdot)$ is a function that quantifies the bandwidth required to transmit the encoded map $f_{e,k}(X_k)$, and $b_k$ is the available bandwidth at time $k$.
Here $\|\cdot\|_{\Frob}$ denotes the Frobenius norm, and~\eqref{eq:mainOrigConstraint2} is enforced component-wise.
In light of our previous comments on signal compression theory, the problem~\eqref{eq:mainOrigObjectiveFunc}-\eqref{eq:mainOrigConstraint2} measures the quality of the compressed map by the Frobenius norm between the (uncompressed) original and quantized maps in~\eqref{eq:mainOrigObjectiveFunc} (which we wish to minimize), while the size of the representation is assigned according to the function $B(\cdot)$ in~\eqref{eq:mainOrigConstraint1}.

The map-compression problem posed by~\eqref{eq:mainOrigObjectiveFunc}-\eqref{eq:mainOrigConstraint2} attempts to find an encoder-decoder pair $(f_{e,k},f_{d,k})$ at every time $k$ so that the distortion between the original map $X_k$ and the reconstructed map at the receiver $f_{d,k}(f_{e,k}(X_k))$, as measured by $\lVert X_k - f_{d,k}(f_{e,k}(X_k))\rVert_{\Frob}$, is minimized while obeying the communication constraint imposed by~\eqref{eq:mainOrigConstraint1}.
Note that~\eqref{eq:mainOrigConstraint2} enforces the condition that the receiver-reconstructed map corresponds to a valid occupancy grid representation of the operating environment.
Observe that the objective of~\eqref{eq:mainOrigObjectiveFunc}--\eqref{eq:mainOrigConstraint2} is to minimize the \textit{true} distortion based on the realization of map $X_k$ available only at the encoder. 
This is in contrast to traditional encoding and decoding problems that often involve optimizing an \emph{expected} distortion function.
Critically, such a traditional approach would be futile in our setting since neither the encoder nor the decoder has access to the probability distribution of $X_k$ due to the arbitrary and unknown time-evolution of the map. 
As a result, standard encoding-decoding schemes from communication and information theory are not applicable to our problem. 
Lastly, note that the search space in~\eqref{eq:mainOrigObjectiveFunc} is that of the joint space of all functions $(f_{e,k},f_{d,k})$ satisfying~\eqref{eq:mainOrigConstraint1} and~\eqref{eq:mainOrigConstraint2}, which is prohibitively large and introduces a coupling between the encoder and decoder that renders the problem intractable to solve both analytically and numerically.
%

\section{Hierarchical Time-Sequential Map-Compression Problem}\label{sec:HierarchMapCompProblem}

The main difficulty in solving~\eqref{eq:mainOrigObjectiveFunc}-\eqref{eq:mainOrigConstraint2} is attributable to: (i) the coupling between map encoder and decoder, and (ii) the vast search-space for the encoding and decoding functions, which require a suitable parametrization to solve in practice.
Our approach thus proceeds in two phases. 
The first is concerned with developing a (upper) bound and re-parameterization of~\eqref{eq:mainOrigObjectiveFunc}-\eqref{eq:mainOrigConstraint2} which elucidates the encoder-decoder coupling. 
The second phase considers enforcing a hierarchical structure (parameterization) on the map encoder to reduce the feasible search space~of the encoding function. 
At the conclusion of this section we state our hierarchical time-sequential map-compression problem that combines the two elements above and which we will show allows for~an approximate solution of~\eqref{eq:mainOrigObjectiveFunc}-\eqref{eq:mainOrigConstraint2} to be tractably obtained.

\subsection{Re-parameterization of Time-Sequential Map-Compression } \label{subsec:reparamTimeSeqProb}

Before discussing a hierarchical parameterization of the encoder, we have the following result, which provides an upper bound to the objective value to the problem~\eqref{eq:mainOrigObjectiveFunc}--\eqref{eq:mainOrigConstraint2}.
%
\begin{proposition}\label{prop:encDecReparam}
	An upper bound to problem~\eqref{eq:mainOrigObjectiveFunc} subject to~\eqref{eq:mainOrigConstraint1} and~\eqref{eq:mainOrigConstraint2} is obtained from the following optimization:
	\begin{equation}\label{eq:objTScompProb}
		\min_{g_{e,k}, g_{d,k}} \lVert \xi_k - g_{e,k}(\xi_k) \rVert_{\Frob} + \lVert g_{e,k}(\xi_k) - g_{d,k}(g_{e,k}(\xi_k))\rVert_{\Frob},
	\end{equation}
	\begin{align*}
    \mathrm{subject~to} \hspace{3cm}
		B(g_{e,k}(\xi_k)) \leq b_k, \hspace{3cm}\\
		0 \leq \hat X_{k-1} + g_{d,k}(g_{e,k}(\xi_k)) \leq 1,\hspace{3cm}
	\end{align*}
	for every time $k$, where $\xi_k := X_k - \hat X_{k-1}$ is the \emph{map innovation signal}, $\hat X_{k-1}$ is the previous map estimate, $g_{e,k}$ is an~encoding function that compresses the map-innovation $\xi_k$ and~$g_{d,k}$ is a decoding function that uses the compressed map-innovation $g_{e,k}(\xi_k)$ so that, together with $\hat X_{k-1}$, a new map estimate $\hat X_k$ is produced according to $\hat X_{k} = \hat X_{k-1} + g_{d,k}(g_{e,k}(\xi_k))$.
\end{proposition}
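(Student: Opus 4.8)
The plan is to exhibit a feasible point of the re-parameterized problem \eqref{eq:objTScompProb} whose objective value dominates (up to the triangle inequality) the objective of the original problem \eqref{eq:mainOrigObjectiveFunc}, thereby certifying that the minimum of \eqref{eq:objTScompProb} is an upper bound for the minimum of \eqref{eq:mainOrigObjectiveFunc}–\eqref{eq:mainOrigConstraint2}. The key observation is the change of variables linking the two formulations: given any encoder–decoder pair $(f_{e,k},f_{d,k})$ acting on $X_k$, define the induced pair acting on the innovation $\xi_k = X_k - \hat X_{k-1}$ by essentially "subtracting off" the deterministic offset $\hat X_{k-1}$, i.e. set $\hat X_k := f_{d,k}(f_{e,k}(X_k))$ and let $g_{d,k}(g_{e,k}(\xi_k)) := \hat X_k - \hat X_{k-1}$, with $g_{e,k}$ inheriting $f_{e,k}$ composed with the (invertible, hence bandwidth-preserving) shift $\xi_k \mapsto \xi_k + \hat X_{k-1} = X_k$. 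Under this identification the decoder constraint $0 \le \hat X_{k-1} + g_{d,k}(g_{e,k}(\xi_k)) \le 1$ becomes exactly \eqref{eq:mainOrigConstraint2}, and the bandwidth constraint $B(g_{e,k}(\xi_k)) \le b_k$ becomes exactly \eqref{eq:mainOrigConstraint1}, so feasibility transfers verbatim.

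The second step is the estimate on the objective. With the substitution above, the reconstruction error of the original problem is $\lVert X_k - \hat X_k \rVert_{\Frob} = \lVert \xi_k - g_{d,k}(g_{e,k}(\xi_k)) \rVert_{\Frob}$, since $X_k - \hat X_k = (X_k - \hat X_{k-1}) - (\hat X_k - \hat X_{k-1}) = \xi_k - g_{d,k}(g_{e,k}(\xi_k))$. I would then insert and remove the term $g_{e,k}(\xi_k)$ and apply the triangle inequality for the Frobenius norm:
\begin{equation*}
\lVert \xi_k - g_{d,k}(g_{e,k}(\xi_k)) \rVert_{\Frob} \le \lVert \xi_k - g_{e,k}(\xi_k) \rVert_{\Frob} + \lVert g_{e,k}(\xi_k) - g_{d,k}(g_{e,k}(\xi_k)) \rVert_{\Frob},
\end{equation*}
which is precisely the objective \eqref{eq:objTScompProb} evaluated at $(g_{e,k}, g_{d,k})$. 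Taking the minimum over feasible $(g_{e,k}, g_{d,k})$ on the right-hand side, and recalling that every feasible $(f_{e,k},f_{d,k})$ for the original problem yields, via the construction above, a feasible pair for \eqref{eq:objTScompProb}, gives that the optimal value of \eqref{eq:mainOrigObjectiveFunc}–\eqref{eq:mainOrigConstraint2} is bounded above by the optimal value of \eqref{eq:objTScompProb}. (Strictly, one should also check the converse direction of the correspondence is not needed — we only need that the original feasible set injects into the re-parameterized one, which the shift map provides.)

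The main obstacle I anticipate is not the inequality itself, which is a one-line triangle-inequality argument, but making the change of variables rigorous and checking that it genuinely preserves feasibility in both constraints. In particular, one must argue that $B(\cdot)$ is invariant under the deterministic shift (i.e., that describing $g_{e,k}(\xi_k)$ costs the same number of bits as describing $f_{e,k}(X_k)$, because $\hat X_{k-1}$ is known to both parties and need not be transmitted), and that the map $\xi_k \mapsto \xi_k + \hat X_{k-1}$ is a bijection on the relevant domain so that minimizing over $g$-pairs is at least as unconstrained as minimizing over $f$-pairs. A subtle point worth a sentence is that $\hat X_{k-1}$ is treated as a fixed, known quantity at time $k$ (the previous estimate), so all of the above is performed "conditionally" on that history; the proposition is implicitly a per-step statement for each $k$, with the $k-1$ estimate regarded as given.
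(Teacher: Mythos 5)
Your ingredients are the same as the paper's -- reparametrize through the innovation $\xi_k = X_k - \hat X_{k-1}$, absorb the known offset $\hat X_{k-1}$ into the decoder, observe that the constraints \eqref{eq:mainOrigConstraint1}--\eqref{eq:mainOrigConstraint2} correspond exactly to the constraints of \eqref{eq:objTScompProb} under this shift, and apply one triangle inequality through $g_{e,k}(\xi_k)$ -- but the logical direction of your reduction is backwards, and the step that closes your argument does not go through as written. You map each feasible $(f_{e,k},f_{d,k})$ of the original problem to a feasible $(g_{e,k},g_{d,k})$ of \eqref{eq:objTScompProb} and obtain ``original objective $\leq$ new objective'' \emph{at that induced pair only}. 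You then ``take the minimum over feasible $(g_{e,k},g_{d,k})$ on the right-hand side''; this is not permitted, since replacing the right-hand side by the minimum over all feasible pairs can only decrease it, and the inequality was established only at the image point. What your chain actually yields is that the original optimal value is at most the infimum of the new objective \emph{over the image of your map}, a quantity that is $\geq$ the optimal value of \eqref{eq:objTScompProb}, so no comparison between the two optimal values follows. Your parenthetical makes the gap explicit: injectivity of the original feasible set into the new one is not what is needed (a strictly larger feasible set for \eqref{eq:objTScompProb} could a priori contain a point with value below the original optimum); what is needed is exactly the converse direction you set aside, namely that every feasible $(g_{e,k},g_{d,k})$ of \eqref{eq:objTScompProb} pulls back to a feasible pair of the original problem whose original objective is dominated by the new objective.

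The repair is one line, and it is what the paper does. Given any feasible $(g_{e,k},g_{d,k})$ for \eqref{eq:objTScompProb}, set $f_{e,k}(X_k):=g_{e,k}(X_k-\hat X_{k-1})$ and $f_{d,k}(\cdot):=\hat X_{k-1}+g_{d,k}(\cdot)$; then \eqref{eq:mainOrigConstraint1} and \eqref{eq:mainOrigConstraint2} hold by construction, and $\lVert X_k-f_{d,k}(f_{e,k}(X_k))\rVert_{\Frob}=\lVert\xi_k-g_{d,k}(g_{e,k}(\xi_k))\rVert_{\Frob}\leq\lVert\xi_k-g_{e,k}(\xi_k)\rVert_{\Frob}+\lVert g_{e,k}(\xi_k)-g_{d,k}(g_{e,k}(\xi_k))\rVert_{\Frob}$, so minimizing the right-hand side over the feasible set of \eqref{eq:objTScompProb} upper-bounds the optimal value of \eqref{eq:mainOrigObjectiveFunc}--\eqref{eq:mainOrigConstraint2}. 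The paper phrases this as writing $\bar f_{d,k}=f_{d,k}-\hat X_{k-1}$, applying the triangle inequality through $g_{e,k}(\xi_k)$ and $g_{d,k}(g_{e,k}(\xi_k))$, and then choosing $f_{e,k}(X_k)=g_{e,k}(\xi_k)$ and $\bar f_{d,k}=g_{d,k}$ to annihilate the residual term -- the same pullback, written as a substitution. Since your shift correspondence is in fact a bijection between the two feasible sets, your argument becomes correct once it is run in this direction; as stated, however, the certification that the minimum of \eqref{eq:objTScompProb} bounds the original minimum is missing. A minor point: no separate invariance of $B(\cdot)$ under the shift needs to be argued -- under the identification $f_{e,k}(X_k)=g_{e,k}(\xi_k)$ the transmitted object is literally the same, so the two bandwidth constraints coincide trivially.
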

\begin{proof}
    The proof is given in the Appendix.
\end{proof}
Notice that Proposition~\ref{prop:encDecReparam} suggests compressing an innovation signal $\xi_k$ as opposed to the map $X_k$ itself -- a widely used technique in the controls and dynamical systems literature~\cite{borkar1997lqg, yuksel2019note, maity2023optimal}. 
It is also worth noting that decision-making based on quantized innovation signals does not necessarily lead to a performance degradation (e.g., predictive coding)~\cite{yuksel2019note}. 
The importance of the optimization problem posed in Proposition~\ref{prop:encDecReparam} lies in two important observations.
Firstly, in re-parameterizing~\eqref{eq:mainOrigObjectiveFunc}-\eqref{eq:mainOrigConstraint2} according to Proposition~\ref{prop:encDecReparam} we have been able to express the problem in terms of the map innovation signal $\xi_k \in  [-1,1]^{n \times n}$, which quantifies the occupancy information discrepancy between the currently observed, finest-resolution, map $X_k$ and the latest map estimate maintained by the receiver $\hat X_{k-1}$.
To see why this is the case, notice that the signal $\xi_k$ is zero over regions of the map where the current estimate held by the receiver $\hat X_{k-1}$ coincides with the occupancy information in the current map $X_k$, and is non-zero elsewhere.
Thus, the map-innovation signal $\xi_k$ quantifies the occupancy information in the current map that is not adequately represented, either due to lack of information, sufficient granularity, or map dynamics, in the receiver's map estimate.
Consequently, compressing $\xi_k$ in place of $X_k$ reduces the possibility of transmitting map information already known to the receiver.
Secondly, the optimization problem posed in Proposition~\ref{prop:encDecReparam} is divided into an encoder loss $\lVert \xi_k - g_{e,k}(\xi_k) \rVert_{\Frob}$, and a decoding loss $\lVert g_{e,k}(\xi_k) - g_{d,k}(g_{e,k}(\xi_k)) \rVert_{\Frob}$.
Importantly, notice that the encoding loss does not depend on the decoding function $g_{d,k}$, and that the encoding and decoding constraints are decoupled, thereby enabling us to separate the encoding and decoding optimization into the problems 
\begin{align} \label{eq:ENCOpt}
\begin{split}
    \text{(ENC)}\hspace{2 cm} & \min_{g_{e,k}} ~\lVert \xi_k - g_{e,k}(\xi_k) \rVert_{\Frob}, \hspace{1.5cm}\\
    \text{subject to}\qquad & B(g_{e,k}(\xi_k)) \leq b_k,    
\end{split}
\end{align}
and 
\begin{align} \label{eq:DECOpt}
\begin{split}
    \text{(DEC)}\hspace{2 cm} & \min_{g_{d,k}} ~\lVert Z_k - g_{d,k}(Z_k) \rVert_{\Frob}, \hspace{1.5cm}\\
    \text{subject to}\qquad & 0 \leq \hat X_{k-1} + g_{d,k}(Z_k) \leq 1,    
\end{split}
\end{align}
where $Z_k$ in~\eqref{eq:DECOpt} is the encoded map innovation (i.e., $g_{e,k}(\xi_k)$) received by the decoder.
Importantly, it can be shown that the optimal decoding function solution to~\eqref{eq:DECOpt}, $g^*_{d,k}$, is given by the \emph{clipping decoder}
\begin{equation}\label{eq:optimalClipDecFunc}
	g^*_{d,k}(Z_k) = \min\{\max\{-\hat X_{k-1},~Z_k\},1-\hat X_{k-1}\},
\end{equation}
where the max and min operations are done element-wise.
The above addresses the first of our challenges pertaining to problem~\eqref{eq:mainOrigObjectiveFunc}-\eqref{eq:mainOrigConstraint2}, namely elucidating the coupling between the map encoder and decoder.
However, we must still specify a suitable parameterization of the encoding function to arrive at a tractable solution algorithm.
To this end, we will enforce that the encoding function $g_{e,k}$ have the structure of a multi-resolution hierarchical tree \cite{larsson2020q} at each time-step $k$.
The connection between multi-resolution trees and signal encoders has been discussed in the literature and interested readers may confer~\cite{larsson2020q,larsson2022generalized} for details, but we here provide a brief review for completeness.
Our development will focus attention on multi-resolution \textit{quadtree} structures, however the results are applicable for any hierarchical structure. 
%

\subsection{Hierarchical Tree Structures as Map Encoders} \label{subsec:treesMapEnc}

To formalize the connection between multi-resolution trees and our problem, we assume that the environment $\X$ is contained within a square grid of side length $2^\ell$ for some integer $\ell > 0$.
Associated with the environment $\X$ is a finest-resolution quadtree $\T_\X$ whose leaf nodes correspond to the cells that comprise $\X$ (i.e., the finest-resolution grid cells), as shown in Fig.~\ref{fig:treeGrid}.
Recall that the collection of occupancy values of the finest-resolution cells at time $k$ is denoted by $X_k \in \R^{2^\ell \times 2^\ell}$, where $[X_k]_{ij} \in [0,1]$ is the occupancy value of the $(i,j)$-th cell at time $k$.
Since each leaf node of $\T_\X$ corresponds to a unique cell index $(i,j)$, $i,j\in\{1,\ldots,2^{\ell}\}$, we define the function $\texttt{ind}(x)$ to return the coordinate $(i,j)$ of the finest resolution leaf node/cell $x\in \Nlf(\T_{\X})$, where $\Nlf(\cdot)$ is the set of leaf nodes for a given tree.
For instance, with reference to Fig.~\ref{fig:treeGrid}, $\texttt{ind}(x_1) = (1,1)$ and $\texttt{ind}(x_2) = (1,2)$, and so on. 
By pruning the finest-resolution tree $\T_\X$ we obtain other, possibly multi-resolution, hierarchical tree representations of $\X$ containing fewer leaf nodes than that of $\T_\X$ (and consequently, fewer cells than $\X$).
We will represent the space of all valid quadtree representations of $\X$ by the set $\T^\Q$.
Then, for any tree $\T\in\T^\Q$, let $\N(\T)$ denote the set of all nodes, and for any $t\in\N(\T_\X)$, we take $\Nlf(\T_{\X(t)})$ to be the set of finest-resolution nodes that are descendant from the node $t$.
See Fig.~\ref{fig:treeGrid} for illustration.

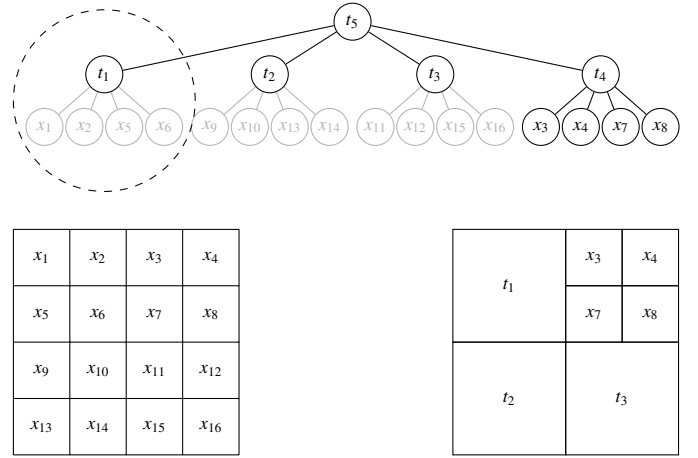
\begin{figure}
    \begin{tikzpicture}
      [level distance=7mm,
       level 1/.style={sibling distance=22mm},
       level 2/.style={sibling distance=5.3mm},spy using outlines={rectangle, red, magnification=1.2,size=25mm,connect spies}]
      \node[draw,black,fill=white,circle,inner sep=1pt,minimum size=14pt] {\scriptsize $t_5$}
         child {node[draw,black,fill=white,circle,inner sep=1pt,minimum size=14pt] (t_1) {\scriptsize $t_1$}
           child {node[draw,black,opacity=0.3,fill=white,circle,inner sep=1pt,minimum size=14pt] (x_1) {\scriptsize $x_1$} edge from parent[draw,black,opacity=0.3]}
           child {node[draw,black,opacity=0.3,fill=white,circle,inner sep=1pt,minimum size=14pt] (x_2) {\scriptsize $x_2$} edge from parent[draw,black,opacity=0.3]}
           child {node[draw,black,opacity=0.3,fill=white,circle,inner sep=1pt,minimum size=14pt] (x_5) {\scriptsize $x_5$} edge from parent[draw,black,opacity=0.3]}
           child {node[draw,black,opacity=0.3,fill=white,circle,inner sep=1pt,minimum size=14pt] (x_6) {\scriptsize $x_6$} edge from parent[draw,black,opacity=0.3]}
         }
         child {node[draw,black,fill=white,circle,inner sep=1pt,minimum size=14pt] {\scriptsize $t_2$}
           child {node[draw,black,opacity=0.3,fill=white,circle,inner sep=1pt,minimum size=14pt] {\scriptsize $x_{9}$} edge from parent[draw,black,opacity=0.3]}
           child {node[draw,black,opacity=0.3,fill=white,circle,inner sep=1pt,minimum size=14pt] {\scriptsize $x_{10}$} edge from parent[draw,black,opacity=0.3]}
           child {node[draw,black,opacity=0.3,fill=white,circle,inner sep=1pt,minimum size=14pt] {\scriptsize $x_{13}$} edge from parent[draw,black,opacity=0.3]}
           child {node[draw,black,opacity=0.3,fill=white,circle,inner sep=1pt,minimum size=14pt] {\scriptsize $x_{14}$} edge from parent[draw,black,opacity=0.3]}
         }
         child {node[draw,black,fill=white,circle,inner sep=1pt,minimum size=14pt] {\scriptsize $t_3$}
           child {node[draw,black,opacity=0.3,fill=white,circle,inner sep=1pt,minimum size=14pt] {\scriptsize $x_{11}$} edge from parent[draw,black,opacity=0.3]}
           child {node[draw,black,opacity=0.3,fill=white,circle,inner sep=1pt,minimum size=14pt] {\scriptsize $x_{12}$} edge from parent[draw,black,opacity=0.3]}
           child {node[draw,black,opacity=0.3,fill=white,circle,inner sep=1pt,minimum size=14pt] {\scriptsize $x_{15}$} edge from parent[draw,black,opacity=0.3]}
           child {node[draw,black,opacity=0.3,fill=white,circle,inner sep=1pt,minimum size=14pt] {\scriptsize $x_{16}$} edge from parent[draw,black,opacity=0.3]}
         }
         child {node[draw,black,fill=white,circle,inner sep=1pt,minimum size=14pt] {\scriptsize $t_4$}
           child {node[draw,black,fill=white,circle,inner sep=1pt,minimum size=14pt] {\scriptsize $x_3$}}
           child {node[draw,black,fill=white,circle,inner sep=1pt,minimum size=14pt] {\scriptsize $x_4$}}
           child {node[draw,black,fill=white,circle,inner sep=1pt,minimum size=14pt] {\scriptsize $x_7$}}
           child {node[draw,black,fill=white,circle,inner sep=1pt,minimum size=14pt] {\scriptsize $x_8$}}
         };

        \node [draw,dashed,inner sep=0pt, circle,fit={(t_1)(x_6)(x_1)}] {};
        
    \end{tikzpicture}
    \\
    \vspace{2pt}
    \\
    \begin{tikzpicture}[scale = 0.75]
        \draw[step=1.0,black,thin] (0,0) grid (4,4);

        \node at (3.5,3.5) {\scriptsize $x_{4}$}; 
        \node at (2.5,3.5) {\scriptsize $x_{3}$}; 
        \node at (3.5,2.5) {\scriptsize $x_{8}$}; 
        \node at (2.5,2.5) {\scriptsize $x_{7}$}; 

        \node at (1.5,3.5) {\scriptsize $x_{2}$}; 
        \node at (0.5,3.5) {\scriptsize $x_{1}$}; 
        \node at (1.5,2.5) {\scriptsize $x_{6}$}; 
        \node at (0.5,2.5) {\scriptsize $x_{5}$}; 

        \node at (3.5,1.5) {\scriptsize $x_{12}$}; 
        \node at (2.5,1.5) {\scriptsize $x_{11}$}; 
        \node at (3.5,0.5) {\scriptsize $x_{16}$}; 
        \node at (2.5,0.5) {\scriptsize $x_{15}$}; 

        \node at (1.5,1.5) {\scriptsize $x_{10}$}; 
        \node at (0.5,1.5) {\scriptsize $x_{9}$}; 
        \node at (1.5,0.5) {\scriptsize $x_{14}$}; 
        \node at (0.5,0.5) {\scriptsize $x_{13}$}; 
    \end{tikzpicture}
    \hfill
    \begin{tikzpicture}[scale = 0.75]
        \draw[black,thin] (0,0) rectangle (2,2);
        \draw[black,thin] (0,2) rectangle (2,4);
        \draw[black,thin] (2,0) rectangle (4,2);

        \draw[black,thin] (2,2) rectangle (3,3);
        \draw[black,thin] (3,2) rectangle (4,3);
        \draw[black,thin] (2,3) rectangle (3,4);
        \draw[black,thin] (3,3) rectangle (4,4);
        
        \node at (3.5,3.5) {\scriptsize $x_{4}$}; 
        \node at (2.5,3.5) {\scriptsize $x_{3}$}; 
        \node at (3.5,2.5) {\scriptsize $x_{8}$}; 
        \node at (2.5,2.5) {\scriptsize $x_{7}$}; 

        \node at (1,3) {\scriptsize $t_{1}$}; 
        
        \node at (1,1) {\scriptsize $t_{2}$}; 

        \node at (3,1) {\scriptsize $t_{3}$}; 
    \end{tikzpicture}
    \caption{(Bottom left) Finest-resolution $4\times 4$ grid with cells $\{x_1,\ldots,x_{16}\}$. (Bottom right) compressed grid aggregating cells $\{x_1,x_2,x_5,x_6\}$ to node $t_1$, cells $\{x_9,x_{10},x_{13},x_{14}\}$ to $t_2$, and so on. (Top) The quadtree tree $\T$ shown in black that corresponds to the compressed grid at right. The quadtree $\T_\X $ is the tree containing all the finest resolution cells $x_1,\ldots,x_{16}$ as leaf nodes, with the (sub)tree $\T_{\X(t_1)}$ shown by tree contained entirely within the dashed circle. $\T$ has seven leaf nodes (i.e., $\Nlf(\T) = \{t_1,t_2,t_3,x_3,x_4,x_7,x_8\}$) with corresponding regions $S_1^{\T}=\{x_1,x_2,x_5,x_6\},~S_2^{\T}=\{x_9,x_{10},x_{13},x_{14}\}$, and so forth. 
    }
    \label{fig:treeGrid}
\end{figure}

The connection between multi-resolution quadtrees and signal encoders can be more explicitly seen by noting that each quadtree represents a partitioning of the cells of  $\X$ into disjoint regions $S_1^\T,\ldots,S_{\lvert \Nlf(\T)\rvert}^\T$ according to 
\begin{equation}
    S_r^\T = \{x \in \Nlf(\T_{\X(t_r)}) : t_r\in\Nlf(\T)\},
\end{equation}
for all $r = 1,\ldots, \lvert\Nlf(\T)\rvert$.
Notice that for any tree $\T$ and any $r \in \{1,\ldots,\lvert \Nlf(\T) \rvert \}$, we have $S_r^\T \subseteq \Nlf(\T_\X)$.
Using the $\texttt{ind}(\cdot)$ function with a slight abuse of notation, we obtain 
\begin{align} \label{eq:indFunction}
    &\texttt{ind}(S_r^\T) = \{\texttt{ind}(x) : x\in S_r^\T\} .
\end{align}
The importance of the function $\texttt{ind}(\cdot)$ in~\eqref{eq:indFunction} is that it provides a bridge between the index numbering of the leaf nodes of the trees and the grid ordering of the occupancy grid cells in the map $X_k$ representing the world environment at time $k$.
To fully define the mapping between an encoding $g_{e,k}$  and a quadtree, we must, in addition to specifying a method by which cells are aggregated, also specify the occupancy value that will be assigned to represent all nodes within an aggregation in the compressed representation.
In other words, we must specify the reproduction points~\cite{thomas2006elements}.
To this end, we notice that since $g_{e,k}(\xi_k)$ defines an aggregation of the map innovation signal $\xi_k$, it follows that $g_{e,k}(\xi_k)$ is piece-wise constant on the partitions defined by the encoder.
That is, for a given quadtree $\T$, $g_{e,k}(\xi_k)$ takes $ \lvert \Nlf(\T)\rvert$ values $g_{1,k},g_{2,k},\ldots,g_{\lvert \Nlf(\T)\rvert, k}$, $g_{r,k}\in\Re$ for $r = 1,\ldots,\lvert \Nlf(\T)\rvert$ and $k\in\mathbb{N}$, one for each of the partitions $S^{\T}_1,\ldots,S^{\T}_{\lvert \Nlf(\T)\rvert}$ of aggregated finest-resolution cells.
The problem of selecting reproduction points is then one of determining the values $g_{r,k}$.
To this end, given the innovation map $\xi_k$ and quadtree $\T$, we select the reproduction points according to
\begin{equation}\label{eq:reprodPtsOptim}
	g_{r,k} = \frac{1}{|S_r^{\T}|} \sum\nolimits_{(i,j) \in \texttt{ind}(S_r^{\T})} [\xi_k]_{ij}, \quad r = 1,\ldots,\lvert \Nlf(\T)\rvert.
\end{equation}
Note that selecting the reproduction points according to~\eqref{eq:reprodPtsOptim} minimizes $\lVert \xi_k - g_{e,k}(\xi_k)\rVert_F$ over all choices of $g_r$, independently of the tree encoder employed.
This can be more clearly seen by noting that $\lVert \xi_k - g_{e,k}(\xi_k)\rVert_F$ can be written as
\begin{align}\label{eq:encCostTree}
    \lVert \xi_k - g_{e,k}(\xi_k)\rVert_{\Frob} = \left( \sum\nolimits_{r=1}^{\lvert \Nlf(\T) \rvert}\! \sum\nolimits_{(i,j) \in \texttt{ind}(S_r^{\T})} \left([\xi_k]_{ij} - g_{r,k}\right)^2 \right)^{\frac{1}{2}},
\end{align}
which shows that \eqref{eq:reprodPtsOptim} provides the optimal $g_r$'s.
Lastly, we will assume that each occupancy cell requires $b > 0$ bits of information for storage or communication, and thus a tree $\T$ with $\lvert\Nlf(\T)\rvert$ leaf nodes requires $b \lvert\Nlf(\T)\rvert$ bits of memory or bandwidth.
If $b_k$ is the maximum number of bits for transmission at time $k$, then the communication constraint becomes one of restricting the number of leaf nodes according to $\lvert \Nlf(\T) \rvert \leq \lfloor b_k/b\rfloor$, where $\lfloor \cdot \rfloor$ is the flooring function.
Therefore, without loss of generality, from now onward we will rewrite the communication constraint as $\lvert \Nlf(\T) \rvert \leq n_k$. 
%

\subsection{Time-Sequential Hierarchical Map-Compression Problem}

By assembling the results from Sections~\ref{subsec:reparamTimeSeqProb} and~\ref{subsec:treesMapEnc}, we arrive at the time-sequential hierarchical map-compression problem, the algorithmic solution to which will be the focus of the remainder of the paper.
\textbf{Time-sequential hierarchical map-compression problem.} Given the world $\X$, a time-horizon $N>0$, a sequence of (probabilistic) occupancy grids $\{X_k\}_{k=0}^{N}$, and a collection of communication-bandwidth limits $\{n_k\}_{k=0}^{N}$, the \emph{time-sequential hierarchical map-compression problem} is one of generating a sequence of hierarchical tree representations $\{\T_k\}_{k=0}^{N}$ by solving~\eqref{eq:ENCOpt} over the space $\T^\Q$, while decoding according to~\eqref{eq:DECOpt}. 
%

\section{Solution Approach}\label{sec:solnApproach}

While it was shown in the previous section that a solution to the decoding problem~\eqref{eq:DECOpt} can be readily obtained independently of the specific encoder employed, obtaining a solution to~\eqref{eq:ENCOpt} over the space of tree-structured encoders is not obvious.
To this end, while we may straightforwardly employ the decoder~\eqref{eq:optimalClipDecFunc} in our algorithm, we will need to exploit the problem structure to design an encoder for our problem.
It is therefore that we next discuss the encoding scheme of our framework.
%

\subsection{Encoding and Decoding Scheme}

The goal is to select a tree-structured encoder that solves problem~\eqref{eq:ENCOpt} over the space of hierarchical representations $\T^\Q$.
The dependence of~\eqref{eq:ENCOpt} on the tree $\T\in\T^\Q$ can be seen via~\eqref{eq:encCostTree}, since the latter has assumed the encoding function (i.e.,~$g_{e,k}$) correspond to a hierarchical tree. 
Employing the ideas from Section~\ref{sec:HierarchMapCompProblem}, the encoding problem is
\begin{equation}\label{eq:treeEncObj}
\min_{\T_k\in\T^\Q} \sum\nolimits_{r=1}^{\lvert \Nlf(\T_k) \rvert}\sum\nolimits_{(i,j) \in \texttt{ind}(S_r^{\T_k})} \left([\xi_k]_{ij} - g_{r,k}\right)^2,
\end{equation}
subject to
\begin{equation}\label{eq:treeEncCons}
    \lvert \Nlf(\T_k) \rvert \leq n_k.
\end{equation}
Notice that, if problem~\eqref{eq:treeEncObj}-\eqref{eq:treeEncCons} results with a tree-encoder $\T_k\in\T^\Q$ for which the objective~\eqref{eq:treeEncObj} attains the global minimum of zero, then the updated map $\hat X_k = X_k$.
This is because if~\eqref{eq:treeEncObj} is zero, then from~\eqref{eq:encCostTree} it follows that $\xi_k = g_{e,k}(\xi_k) = Z_k$ and so from~\eqref{eq:DECOpt} and~\eqref{eq:optimalClipDecFunc} we have $\hat X_{k} = \hat X_{k-1} + g^*_{d,k}(g_{e,k}(\xi_k)) = \hat X_{k-1} + \xi_k = \hat X_{k-1} + (X_k - \hat X_{k-1}) = X_k$, since $X_k$ is a feasible occupancy grid and thus no clipping is required. 
Our objective is now to devise an approach towards obtaining a solution to~\eqref{eq:treeEncObj}-\eqref{eq:treeEncCons}.
To this end, it should be noted that~\eqref{eq:treeEncObj}-\eqref{eq:treeEncCons} is a discrete optimization problem since the search space $\T^\Q$ is a countable set.
Consequently, obtaining a solution to~\eqref{eq:treeEncObj}-\eqref{eq:treeEncCons} is not easy as grid-search approaches that generate an exhaustive list of candidate solutions are not viable for anything but small environment sizes.
Despite this challenge, we note that we may decompose our problem~\eqref{eq:treeEncObj}-\eqref{eq:treeEncCons} into incremental cost and constraint contributions by each node $t\in \N(\T)$ that is expanded when constructing any $\T\in\T^\Q$ from the the root node by adopting a similar approach to that discussed in~\cite{larsson2021information,larsson2023linear}
To this end, by leveraging the properties of the encoding distortion function~\eqref{eq:encCostTree}, we find that~\eqref{eq:treeEncObj}-\eqref{eq:treeEncCons} can be written as
\begin{equation}\label{eq:encObj}
    \min_{\mathbf{z}}~~\mathbf{z}\tp \Delta_D
\end{equation}
subject to
\begin{align}
     (\theta - 1)\mathbf{z}\tp\mathbf{1}_{\lvert \Nint(\T_\X) \rvert} & \leq n_k - 1, \label{eq:encCons1} \\
     A \mathbf{z} &\leq 0, \label{eq:encCons2} \\
     [\mathbf{z}]_t &\in \{0,1\}, \quad t\in\Nint(\T_\X), \label{eq:encCons3}
\end{align}
where $\Nint(\T_\X) = \N(\T_\X)\setminus\Nlf(\T_\X)$ is the set of interior (i.e., non-leaf) nodes of $\T_\X$, $\theta \in \mathbb N$ is the (assumed constant) branching factor, and $\mathbf{1}_{\lvert \Nint(\T_\X) \rvert}$ is a column vector of ones of length $\lvert \Nint(\T_\X) \rvert$.
Relation~\eqref{eq:encCons1} represents the communication constraint placed on the system, which follows since, for trees with constant branching factor $\theta$, the number of leaf nodes and interior nodes of any tree $\T\in\T^\Q$ can be related according to $\lvert \Nlf(\T) \rvert = (\theta-1)\lvert \Nint(\T) \rvert + 1$, and $\lvert \Nint(\T) \rvert = \mathbf{z}\tp \mathbf{1}_{\lvert \Nint(\T_\X) \rvert}$.
The constraints~\eqref{eq:encCons2} and~\eqref{eq:encCons3} ensure that the integer vector $\mathbf z$ corresponds to some $\T\in\T^\Q$ by means of two observations.
The first is that each interior node (i.e., expandable node) $t\in\Nint(\T_\X)$ of the tree $\T_\X\in\T^\Q$ may be either expanded or aggregated.
Consequently, the vector $\mathbf{z}\in \mathbb{Z}^{\lvert \Nint(\T_\X)\rvert}$ where if $[\mathbf z]_t = 1$ then the node $t\in\Nint(\T_\X)$ is expanded, and $[\mathbf z]_t = 0$ otherwise, leading to~\eqref{eq:encCons3}.
The second observation is that, for a hierarchical tree structure to represent a valid multi-resolution depiction of the environment, it must be the case that a child node $t'$ of some node $t\in\Nint(\T_\X)$ cannot be expanded unless its parent $t$ has been, as illustrated in Fig.~\ref{fig:ILPConstraint}.
Moreover, if a parent $t$ of some node $t'$ has been expanded, this does not imply that $t'$ itself be.
Collecting the above observations, we see that we can express this expansion constraint as $[\mathbf z]_{t'} - [\mathbf z]_t \leq 0$, and write it in the form of~\eqref{eq:encCons2} where the matrix $A$ is such that $[A]_{ij}\in\{-1,0,1\}$.

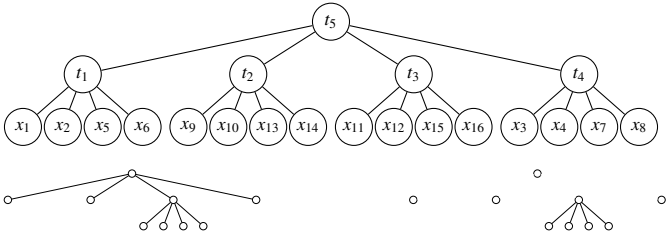
\begin{figure}
    \begin{tikzpicture}
      [level distance=7mm,
       level 1/.style={sibling distance=22mm},
       level 2/.style={sibling distance=5.3mm}]
      \node[draw,black,fill=white,circle,inner sep=1pt,minimum size=14pt] {\scriptsize $t_5$}
         child {node[draw,black,fill=white,circle,inner sep=1pt,minimum size=14pt] (t_1) {\scriptsize $t_1$}
           child {node[draw,black,fill=white,circle,inner sep=1pt,minimum size=14pt] (x_1) {\scriptsize $x_1$} edge from parent[draw,black]}
           child {node[draw,black,fill=white,circle,inner sep=1pt,minimum size=14pt] (x_2) {\scriptsize $x_2$} edge from parent[draw,black]}
           child {node[draw,black,fill=white,circle,inner sep=1pt,minimum size=14pt] (x_5) {\scriptsize $x_5$} edge from parent[draw,black]}
           child {node[draw,black,fill=white,circle,inner sep=1pt,minimum size=14pt] (x_6) {\scriptsize $x_6$} edge from parent[draw,black]}
         }
         child {node[draw,black,fill=white,circle,inner sep=1pt,minimum size=14pt] {\scriptsize $t_2$}
           child {node[draw,black,fill=white,circle,inner sep=1pt,minimum size=14pt] {\scriptsize $x_{9}$} edge from parent[draw,black]}
           child {node[draw,black,fill=white,circle,inner sep=1pt,minimum size=14pt] {\scriptsize $x_{10}$} edge from parent[draw,black]}
           child {node[draw,black,fill=white,circle,inner sep=1pt,minimum size=14pt] {\scriptsize $x_{13}$} edge from parent[draw,black]}
           child {node[draw,black,fill=white,circle,inner sep=1pt,minimum size=14pt] {\scriptsize $x_{14}$} edge from parent[draw,black]}
         }
         child {node[draw,black,fill=white,circle,inner sep=1pt,minimum size=14pt] {\scriptsize $t_3$}
           child {node[draw,black,fill=white,circle,inner sep=1pt,minimum size=14pt] {\scriptsize $x_{11}$} edge from parent[draw,black]}
           child {node[draw,black,fill=white,circle,inner sep=1pt,minimum size=14pt] {\scriptsize $x_{12}$} edge from parent[draw,black]}
           child {node[draw,black,fill=white,circle,inner sep=1pt,minimum size=14pt] {\scriptsize $x_{15}$} edge from parent[draw,black]}
           child {node[draw,black,fill=white,circle,inner sep=1pt,minimum size=14pt] {\scriptsize $x_{16}$} edge from parent[draw,black]}
         }
         child {node[draw,black,fill=white,circle,inner sep=1pt,minimum size=14pt] {\scriptsize $t_4$}
           child {node[draw,black,fill=white,circle,inner sep=1pt,minimum size=14pt] {\scriptsize $x_3$}}
           child {node[draw,black,fill=white,circle,inner sep=1pt,minimum size=14pt] {\scriptsize $x_4$}}
           child {node[draw,black,fill=white,circle,inner sep=1pt,minimum size=14pt] {\scriptsize $x_7$}}
           child {node[draw,black,fill=white,circle,inner sep=1pt,minimum size=14pt] {\scriptsize $x_8$}}
         };
    \end{tikzpicture}
    \\
    \vspace{-3pt}
    \\
    \scalebox{0.5}{%
       \begin{tikzpicture}
        [level distance=7mm,
           level 1/.style={sibling distance=22mm},
           level 2/.style={sibling distance=5.3mm},every node/.style={scale=0.4}]
        \node[draw,black,fill=white,circle,inner sep=1pt,minimum size=14pt]  (feasibleTree) at (0,0) {}
            child {node[draw,black,fill=white,circle,inner sep=1pt,minimum size=14pt] (t_1) {}
            }
            child {node[draw,black,fill=white,circle,inner sep=1pt,minimum size=14pt] {}
            }
            child {node[draw,black,fill=white,circle,inner sep=1pt,minimum size=14pt] {}
                child {node[draw,black,fill=white,circle,inner sep=1pt,minimum size=14pt] {} edge from parent[draw,black]}
                child {node[draw,black,fill=white,circle,inner sep=1pt,minimum size=14pt] {} edge from parent[draw,black]}
                child {node[draw,black,fill=white,circle,inner sep=1pt,minimum size=14pt] {} edge from parent[draw,black]}
                child {node[draw,black,fill=white,circle,inner sep=1pt,minimum size=14pt] {} edge from parent[draw,black]}
            }
            child {node[draw,black,fill=white,circle,inner sep=1pt,minimum size=14pt] {}
        };
     \end{tikzpicture}
    }
    \hfill
    \scalebox{0.5}{%
       \begin{tikzpicture}
        [level distance=7mm,
           level 1/.style={sibling distance=22mm},
           level 2/.style={sibling distance=5.3mm},every node/.style={scale=0.4}]
        \node[draw,black,fill=white,circle,inner sep=1pt,minimum size=14pt]  (feasibleTree) at (0,0) {}
            child {node[draw,black,fill=white,circle,inner sep=1pt,minimum size=14pt] (t_1) {} edge from parent[white]
            }
            child {node[draw,black,fill=white,circle,inner sep=1pt,minimum size=14pt] {} edge from parent[white]
            }
            child {node[draw,black,fill=white,circle,inner sep=1pt,minimum size=14pt] {} edge from parent[white]
                child {node[draw,black,fill=white,circle,inner sep=1pt,minimum size=14pt] {} edge from parent[draw,black]}
                child {node[draw,black,fill=white,circle,inner sep=1pt,minimum size=14pt] {} edge from parent[draw,black]}
                child {node[draw,black,fill=white,circle,inner sep=1pt,minimum size=14pt] {} edge from parent[draw,black]}
                child {node[draw,black,fill=white,circle,inner sep=1pt,minimum size=14pt] {} edge from parent[draw,black]}
            }
            child {node[draw,black,fill=white,circle,inner sep=1pt,minimum size=14pt] {} edge from parent[white]
        };
     \end{tikzpicture}
    }
    \caption{Illustration of integer program constraints. (top) For the tree shown, the interior nodes (i.e., those nodes with children) are $\{t_1,t_2,t_3,t_4,t_5\}$. To represent an allowable pruning of the tree (bottom left), it must be the case that for any of the nodes $t_1,~t_2,~t_3$ or $t_4$ to be expanded, their parent $t_5$ must be expanded as well. (bottom right) An invalid pruning of the tree obtained by not enforcing the parental expansion constraint, with $t_3$ expanded without the expansion of $t_5$. By taking $[\mathbf z]_i$ to be the state (expanded or not) of node $t_i$, we see that for valid pruning we have $[\mathbf z]_i - [\mathbf z]_5 \leq 0$ for each $i = 1,\ldots,4$. Collecting these inequalities leads to~\eqref{eq:encCons2}.}
    \label{fig:ILPConstraint}
\end{figure}

In~\eqref{eq:encObj} the vector $\Delta_D \in \Re^{\lvert \Nint(\T_\X)\rvert}$ provides the incremental reduction in the map distortion~\eqref{eq:treeEncObj} due to expanding nodes to create the tree $\T\in\T^\Q$.
That is, each element $[\Delta_D]_t$ of the vector $\Delta_D$ quantifies the reduction in distortion due to expanding, or adding, the node $t$ to the solution of~\eqref{eq:treeEncObj}, and can be shown to be given by
\begin{equation}\label{eq:deltaDistortion}
    [\Delta_D]_t = 
   	s(t)\tp
    H(t)
    s(t),
\end{equation}
where the vector $s(t) =[s_{t'_1},\ldots,s_{t'_n}]\tp$ contains the reproduction point values $s_{t'_i}$ for each of the children $t'_i$ of $t$, and  $H(t) \in \Re^{\theta \times \theta}$ is a matrix whose entries $[H(t)]_{ij}$ are given by 
\begin{equation}
	[H(t)]_{ij} = 
	 \begin{cases}
		\frac{h(t) - h(t_i)\theta^2}{\theta^2}, & \text{ if } i = j, \\
		\frac{h(t)}{\theta^2}, & \text{ otherwise},
	\end{cases}
\end{equation}
where $h(t) = \lvert \Nlf(\T_{\X(t)}) \rvert$ are the number of finest-resolution nodes descendant from the node $t$.
Recall that the reproduction points in our work are determined according to~\eqref{eq:reprodPtsOptim}, although the relation~\eqref{eq:deltaDistortion} holds irrespective of the specific choice of reproduction points.
Once the solution to~\eqref{eq:encObj}-\eqref{eq:encCons3} is known, the corresponding tree $\T_k\in\T^\Q$ can be identified from the solution vector $\mathbf z$, from which the value of $g_{e,k}(\xi_k)$ can be readily determined as 
\begin{equation}\label{eq:encodingZt}
    [Z_k]_{ij} = g_{r,k}, \quad r\in\{q : (i,j)\in \texttt{ind}(S^{\T_k}_q)\},
\end{equation}
where $g_{r,k}$ is computed according to~\eqref{eq:reprodPtsOptim}, and $Z_k = g_{e,k}(\xi_k)$ is the quantized representation of the map innovation signal at time $k$.
Since the collection $S^{\T_k}_1,\ldots,S^{\T_k}_{\lvert \Nlf(\T_k)\rvert}$ forms a partition of the occupancy grid $X_k$ induced by the tree $\T_k\in\T^\Q$, it follows that any cell $(i,j)$ of $X_k$ is aggregated to exactly one leaf $t_r\in\Nlf(\T_k)$ and thus~\eqref{eq:encodingZt} is a singleton for each $(i,j)$.
The signal $Z_k$ is then transmitted to the receiving agent.
Upon obtaining $Z_k$, the receiving agent will create $\hat X_k$ according to~\eqref{eq:optimalClipDecFunc}; that is
\begin{equation}\label{eq:treeDecGen}
    \hat X_{k} = \hat X_{k-1} + \min\{\max\{-\hat X_{k-1},~Z_k\},1-\hat X_{k-1}\},
\end{equation}
where the minimum and maximum operations are performed element-wise.
We now integrate the encoding and decoding schemes discussed above into an algorithmic framework.
%

\begin{algorithm}[tb] 
	\caption{Multi-resolution occupancy grid compression}
	\begin{algorithmic}[1]
		\State \textbf{Input:} Environment side-length ($\ell$), Time-Horizon length $N > 0$, Collection of leaf-node constraints $\{n_k\}_{k=1}^{N}$ 
		\For{$k = 1:N$} \label{algLine:forLoopTime}
		\State $X_k \gets \texttt{environmentUpdate}()$; \label{algLine:trueMapUpdate}
		\State $\xi_k \gets X_k - \hat{X}_{k-1}$; \label{algLine:generateMapInnovation}
		\State $\T_k \gets \texttt{HierarchicalTreeProblem}(\xi_k, n_k, \ell)$;  \label{algLine:SolveILP}
		\State $Z_k \gets \texttt{GenerateQuantizedInnovation}(\T_k, \xi_k)$; \label{algLine:genQuantizeMapInnovation}
		\State $\hat{X}_{k} \gets \texttt{SendToSender}(Z_k)$; \label{algLine:sendQuanMapInnovation}
		\EndFor
	\end{algorithmic}\label{alg:mapCompressionAlg}
\end{algorithm}

\subsection{Time-Sequential Hierarchical Map-Compression Algorithm}

The time-sequential hierarchical map-compression algorithm is presented in Algorithm~\ref{alg:mapCompressionAlg}.
The algorithm first generates the current occupancy map from environment perceptual information in line~\ref{algLine:trueMapUpdate}. 
The sender then, in line~\ref{algLine:generateMapInnovation}, computes the map innovation signal $\xi_k$.
Using $\xi_k$ and $n_k$, the sender solves~\eqref{eq:treeEncObj}-\eqref{eq:treeEncCons} in line~\ref{algLine:SolveILP} to generate the encoder $\T_k \in \T^\Q$.
The quantized innovation signal $Z_k$ is generated in line~\ref{algLine:genQuantizeMapInnovation} according to~\eqref{eq:encodingZt}, and sent to the receiving agent in line~\ref{algLine:sendQuanMapInnovation}.
Upon receiving $Z_k$, the current map estimate $\hat X_{k-1}$ and~\eqref{eq:treeDecGen} are employed to generate $\hat X_{k}$.
Once $\hat X_k$ is generated in line~\ref{algLine:sendQuanMapInnovation} of Algorithm~\ref{alg:mapCompressionAlg}, the algorithm proceeds to the next time-step and repeats.
Importantly, note that that the number of time-steps $N>0$ and the leaf-node constraints $\{n_k\}_{k=1}^{N}$ need not be provided apriori as the algorithm is agnostic to future information, since the algorithm leverages only current and past information to generate the map encoder.
Consequently, the algorithm is amenable to online decision-making or path-planning applications.
%

\section{Simulation Results and Discussion}\label{sec:ResultsDiscuss}

\begin{figure}[b]
	\centering
	\subfloat[]{\label{fig:originalTrueEnv}\includegraphics[width=0.48\columnwidth]{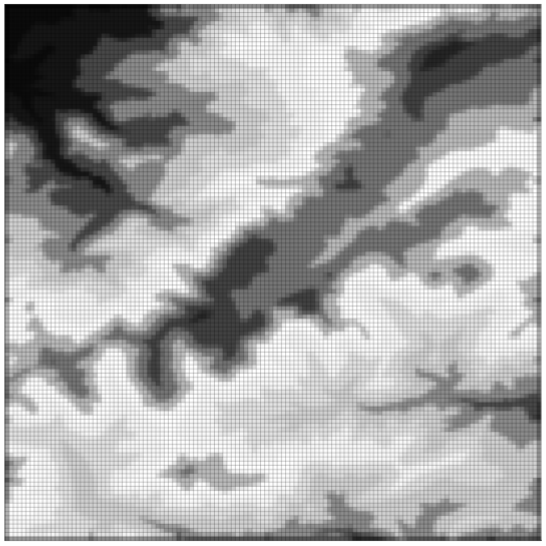}}
    \hfill
    \subfloat[]{\label{fig:fullEnvWithPath}\includegraphics[width=0.48\columnwidth]{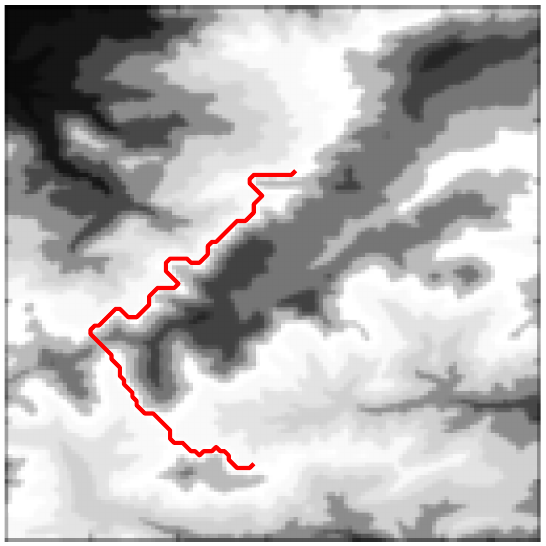}}
    \caption{(a) The 128$\times$128 probabilistic environment. (b) Full-resolution 128$\times$128 probabilistic environment together with amoeba path for time-varying map compression. Grey scale intensity corresponds to probability that a cell is occupied.}
    \label{fig:FRenvCaseIandII}
\end{figure}
%

%
To evaluate our approach, we consider two scenarios: in the first, the environment is static (i.e., $X_k = X$, for all time $k$), while in the second, we compress a time-varying map where the occupancy values change due to the movement of another agent (e.g., an amoeba) within the environment. 
The environment considered in both cases is shown in Fig.~\ref{fig:FRenvCaseIandII}.
%

\subsection{Static Environment Compression}\label{subsec:results_StaticEnv}

In this case, the transmitting agent has access to the occupancy map shown in Fig.~\ref{fig:originalTrueEnv}, which is assumed to be unchanging.
With knowledge of the environment, the sending agent is to employ our framework to assist another (receiving) agent with developing a map estimate in the presence of communication constraints.
To this end, we will assume that the receiving agent has no apriori knowledge of the environment, and thus its initial map estimate $\hat X_0$ is an occupancy grid with the value of $0.5$ for every cell.
We run our framework for $N = 11$ steps with a communication constraint sequence $\{n_k\}_{k=1}^{11}$ that corresponds to $\{1\%, 5\%, 2\%, 20\%, 30\%, 5\%, 8\%, 15\%, 40\%, 15\%, 10\%\}$ of the finest-resolution nodes.
It is important to note that although the environment in this case does not evolve as a function of time, the sender is faced with communication resource constraints and therefore cannot transmit all map information to the receiver at once.
As a result, this scenario will demonstrate how the framework we developed in this paper enables the sender to incrementally send map information to the receiver to generate its map estimate, which is shown to converge to the true map as time progresses.
%

\begin{figure}[t]
	\centering
	\subfloat[]{\label{fig:staticEnvDistortionBar}\includegraphics[width=0.49\columnwidth]{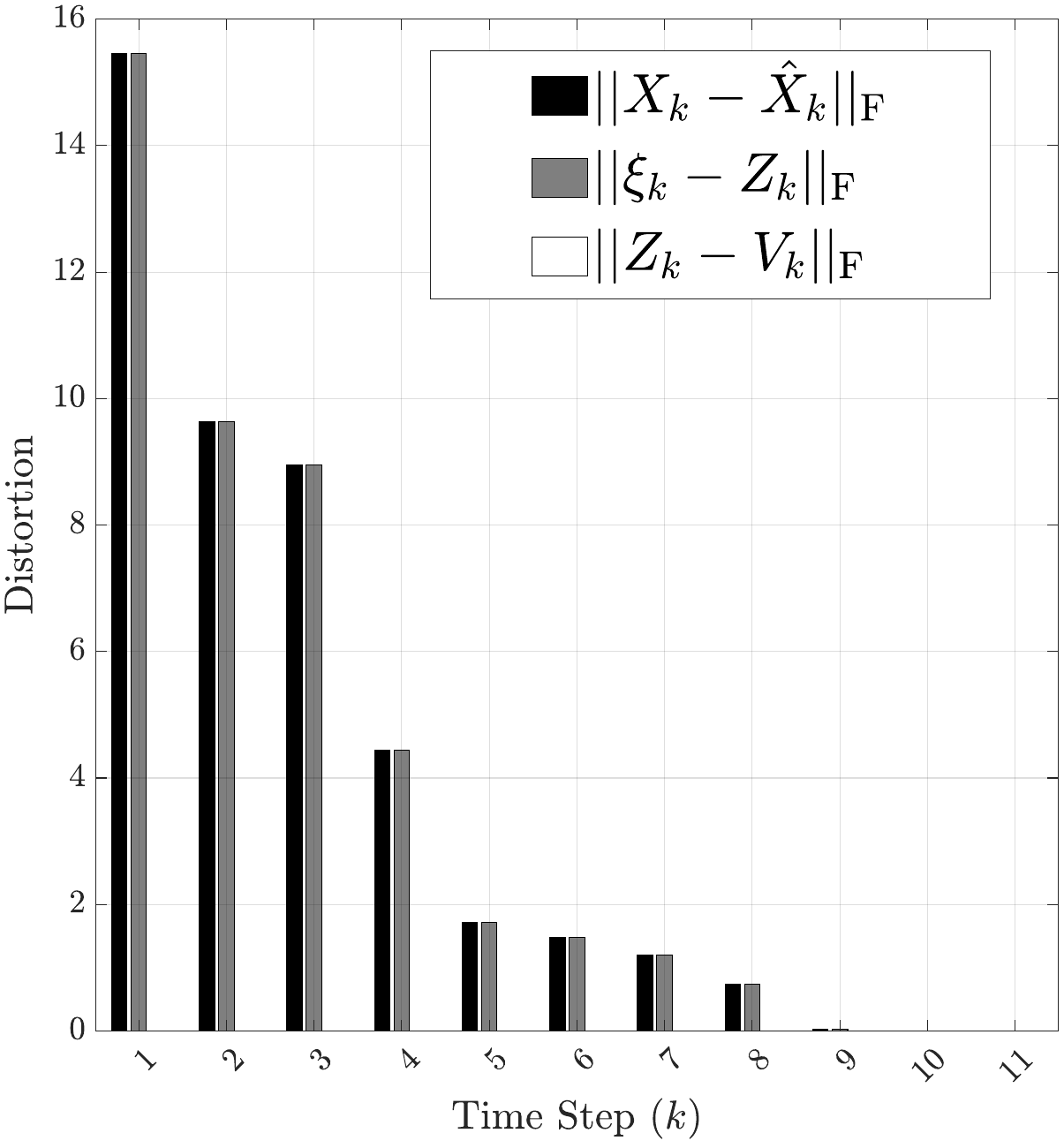}}
	\hfill
	\subfloat[]{\label{fig:staticEnvLeafsBar}\includegraphics[width=0.49\columnwidth]{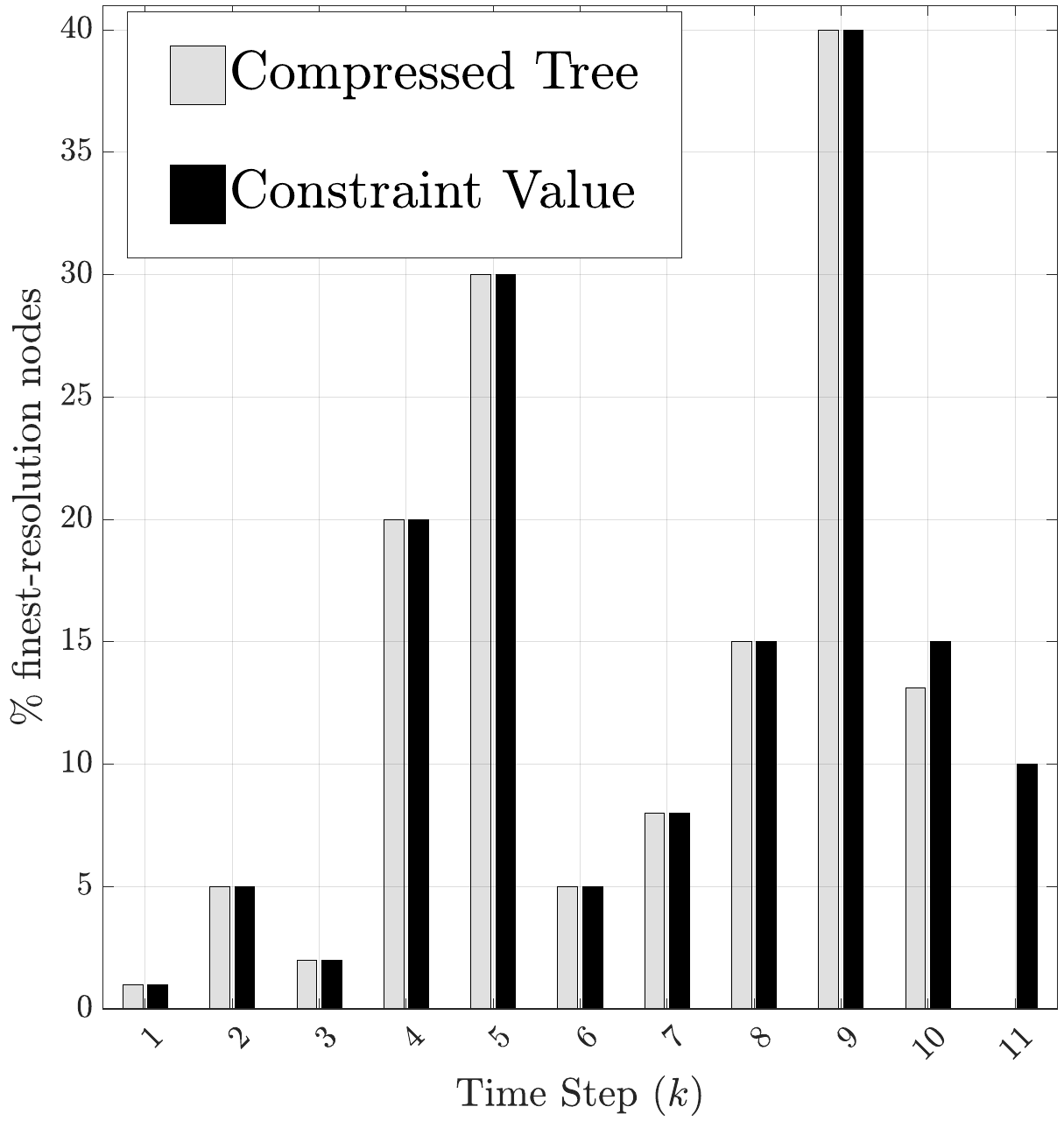}} 
	\caption{Distortion and number of leaf nodes (as percentage of finest-resolution map) vs. time-step. (a) Distortion vs. time-step. (b) Number of leaf nodes of the tree encoder solution vs. time-step with the constraint $b_k$ shown.}
	\label{fig:staticEnvDistortionLeafsBar}
\end{figure}

\begin{figure}[b]
	 \centering
	 \subfloat[Time step $k=1$.]{\label{fig:staticEnvEstimate1}\includegraphics[width=0.33\columnwidth]{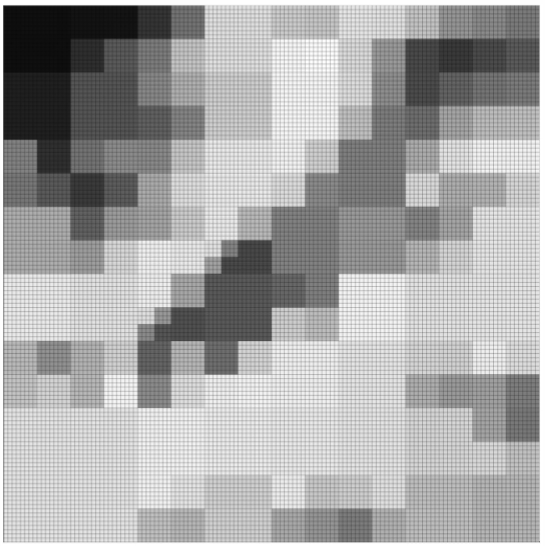}}
	 \hfill
	 \subfloat[Time step $k=4$.]{\label{fig:staticEnvEstimate2}\includegraphics[width=0.33\columnwidth]{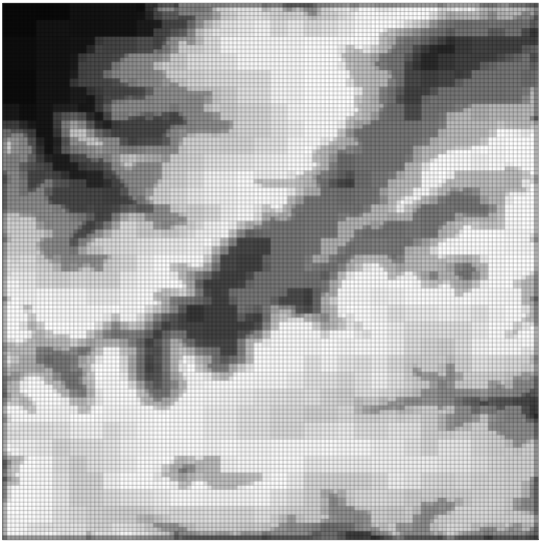}}
	 \hfill
	 \subfloat[Time step $k=10$.]{\label{fig:staticEnvEstimate3}\includegraphics[width=0.33\columnwidth]{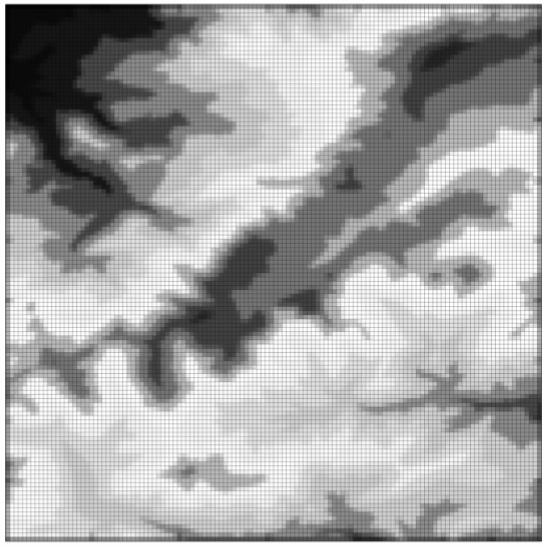}} 
	 \caption{Estimated map sequence $\hat X_k$.}
	 \label{fig:staticEnvEstimatedMapSequence}
\end{figure}

%
In Fig.~\ref{fig:staticEnvLeafsBar} we show the communication constraint together with the number of cells in the compressed representation for each time step as determined by Algorithm~\ref{alg:mapCompressionAlg}.
A sample of map estimates $\hat X_k$ that are obtained from our framework are shown in Fig.~\ref{fig:staticEnvEstimatedMapSequence}.
Furthermore, we show in Fig.~\ref{fig:staticEnvDistortionBar} the map distortion as a function of time-step, where $V_k = \min\{\max\{-\hat X_{k-1},Z_k\},1-\hat X_{k-1}\}$ so that $\lVert Z_k - V_k \rVert_{\Frob}$ is the second term in~\eqref{eq:objTScompProb}.
Consequently, Fig.~\ref{fig:staticEnvDistortionBar} shows the contribution of each of the distortion terms, the innovation-distortion $\lVert \xi_k - Z_k\rVert_{\Frob}$ and the decoding distortion $\lVert Z_k - V_k \rVert_{\Frob}$, to the upper bound of the overall estimation error $\lVert X_k - \hat X_{k} \rVert_{\Frob}$.
We also see from Fig.~\ref{fig:staticEnvDistortionBar} that the distortion $\lVert X_k - \hat X_k \rVert_{\Frob}$ is monotone decreasing with the number of time-steps, which is expected since the transmitting agent will incrementally encode map information not already held by the receiver as time progresses as the environment is static.
Notice also from Figs.~\ref{fig:staticEnvLeafsBar} and~\ref{fig:staticEnvEstimatedMapSequence} that the receiver is able to form a reasonable estimate of the environment despite a strict communication constraint placed on the sender. 
%

\begin{figure}[tbh]
	\centering
	\subfloat[Time step $k=1$.]{\label{fig:innovationSignal1}\includegraphics[width=0.33\columnwidth]{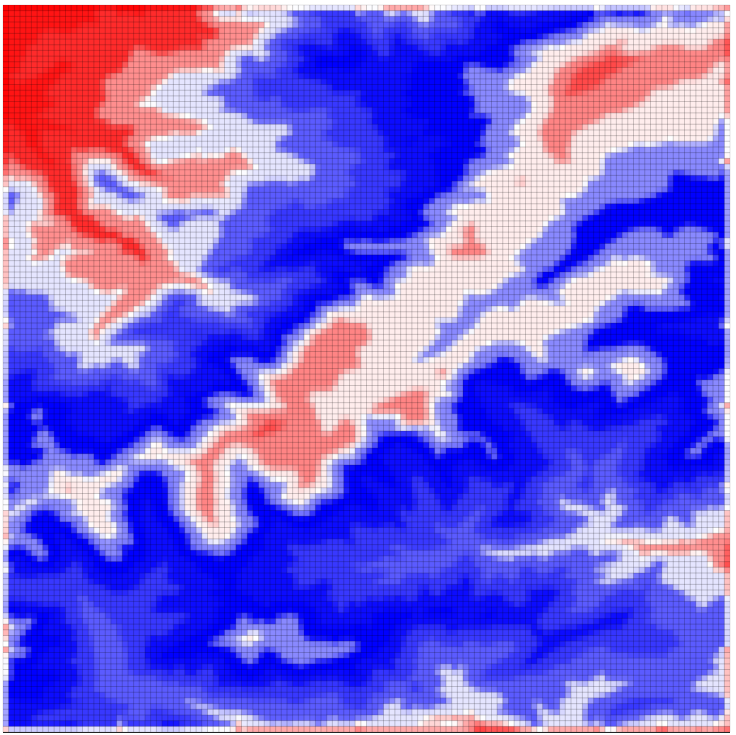}}
	\hfill
	\subfloat[Time step $k=4$.]{\label{fig:innovationSignal2}\includegraphics[width=0.33\columnwidth]{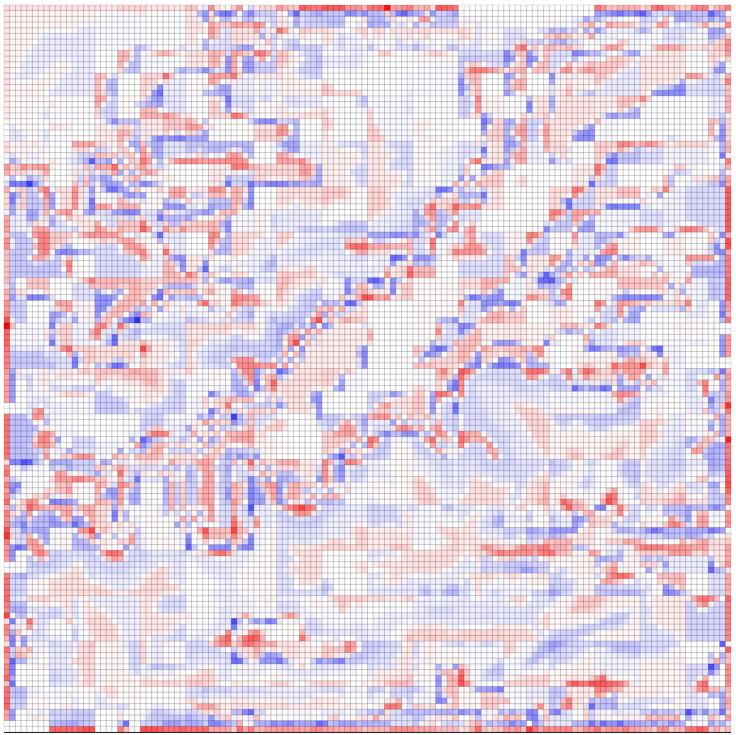}}
	\hfill
	\subfloat[Time step $k=10$.]{\label{fig:innovationSignal3}\includegraphics[width=0.33\columnwidth]{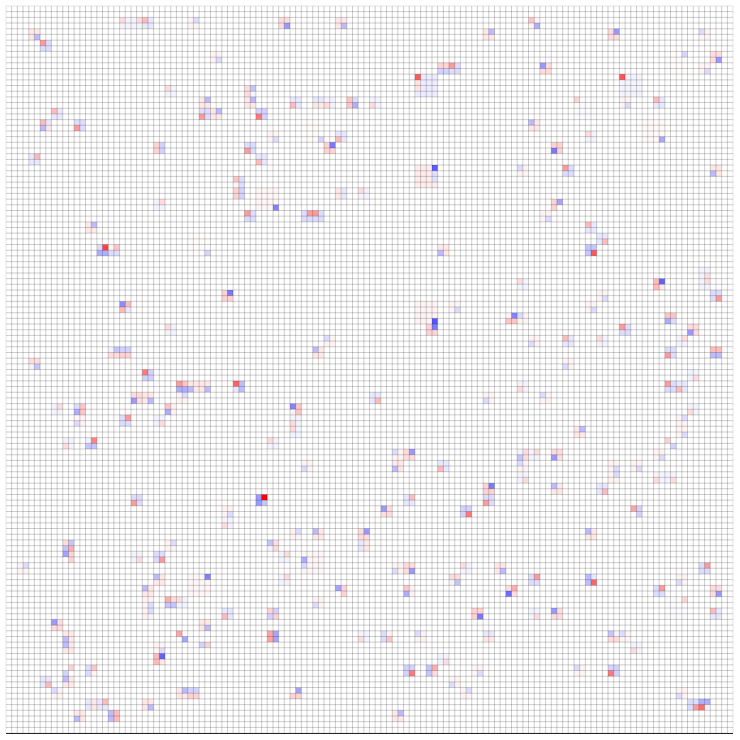}} \\
	%
	\subfloat[Time step $k=1$.]{\label{fig:innovationSignalOvrly1}\includegraphics[width=0.33\columnwidth]{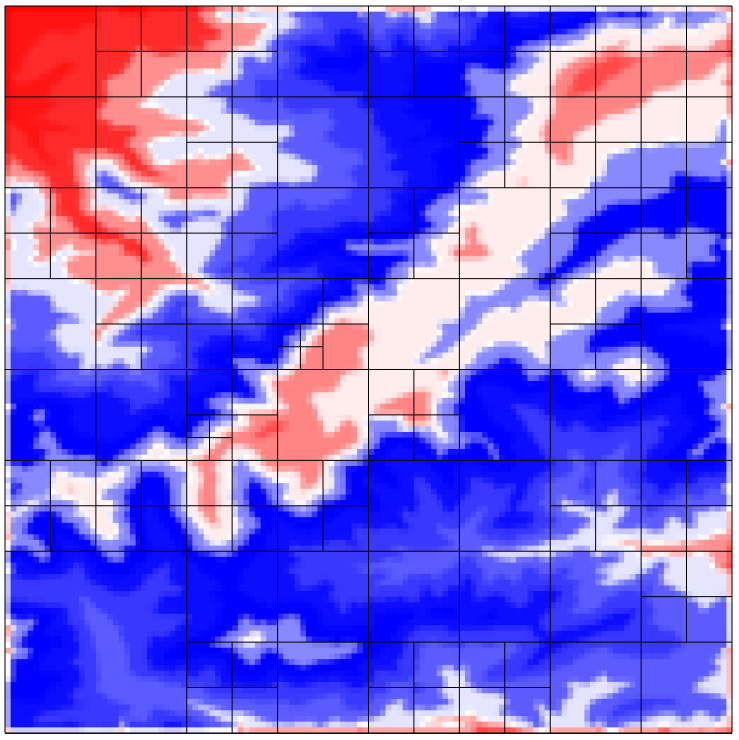}}
	\hfill
	\subfloat[Time step $k=4$.]{\label{fig:innovationSignalOvrly2}\includegraphics[width=0.33\columnwidth]{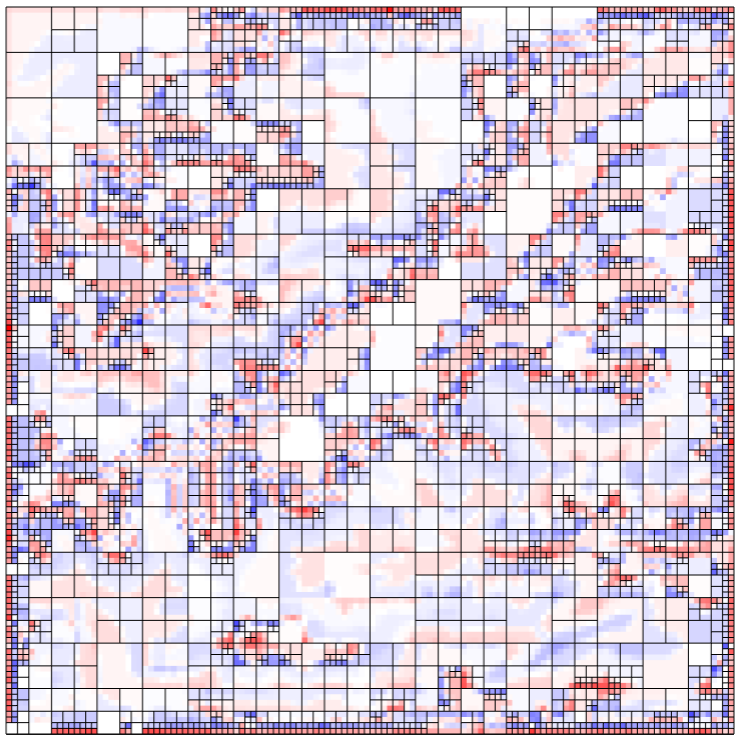}}
	\hfill
	\subfloat[Time step $k=10$.]{\label{fig:innovationSignalOvrly3}\includegraphics[width=0.33\columnwidth]{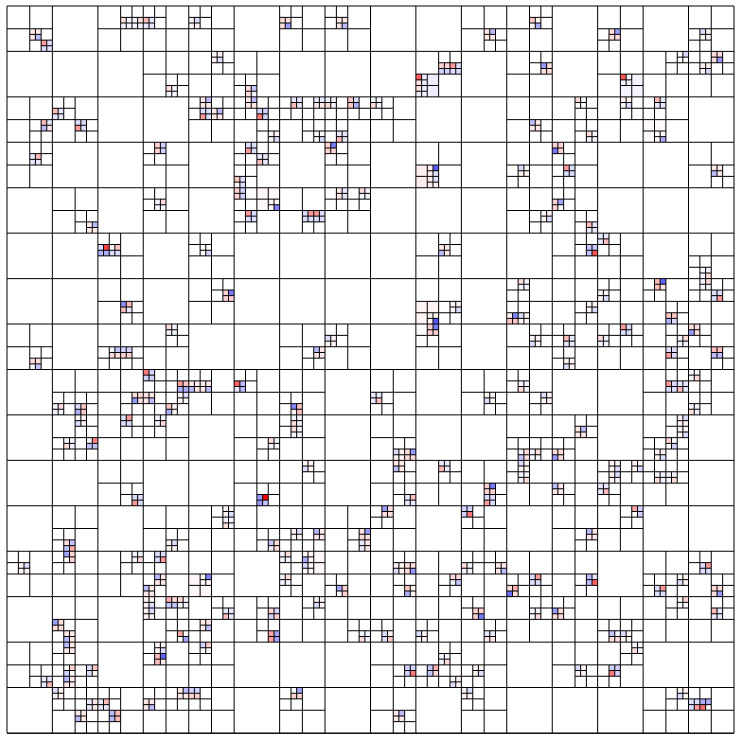}} 
	\caption{Map innovation signal together with the selected hierarchical tree encoder. (a)-(c) map innovation signal $\xi_k$. (d)-(f) The map innovation signal in (a)-(c) with the tree encoder overlay. Color corresponds to innovation signal sign and magnitude: red is a cell with positive innovation signal, blue is a cell with negative innovation signal, and white is a cell with an innovation signal of zero. Intensity of corresponding color indicates magnitude of signal strength.}
	\label{fig:innovationSeqTemplateStatic}
\end{figure}

The information not already held by the receiver is captured by the map innovation signal, shown in Fig.~\ref{fig:innovationSeqTemplateStatic}, alongside the tree-structured encoder for selected time-steps.
From Fig.~\ref{fig:innovationSignal1} we see that, at the initial time step $k=1$, the map innovation signal is generally non-zero, which indicates that the receiving agent does not hold the correct understanding of the environment.
Those regions of the map that do have zero innovation at $k=1$ correspond to areas that have a (true) probability of occupancy equal to $0.5$, as these values coincide with the occupancy probability of the initial map estimate $\hat X_0$.
We also see from the sequence shown in Figs.~\ref{fig:innovationSignal1}-\ref{fig:innovationSignal3} that, as time progresses, the map innovation is reduced and eventually nearly zero for all cells in Fig.~\ref{fig:innovationSignal3}.
This is consistent with Fig.~\ref{fig:staticEnvEstimatedMapSequence} in that, as the map estimate $\hat X_k$ approaches the true map $X_k$, the discrepancy between the two maps reduces, and therefore the map innovation signal will approach zero.
An additional observation from Figs.~\ref{fig:innovationSignalOvrly1} and~\ref{fig:innovationSignalOvrly3} is that the multi-resolution hierarchical tree encoder maintains high-resolution in regions where the map innovation discrepancy is large (in absolute value).
It should be noted, however, that the granularity of the tree-structured encoder depicted in Figs.~\ref{fig:innovationSignalOvrly1}-\ref{fig:innovationSignalOvrly3} is restricted by the leaf node constraint shown in Fig.~\ref{fig:staticEnvLeafsBar}, the latter of which represents the communication constraints imposed on our problem.
Therefore, the tree solution to~\eqref{eq:encObj}-\eqref{eq:encCons3} will attempt to resolve as many areas of large map innovation distortion as possible with high resolution, subject to the granularity constraints imposed on the problem.
On this note, we also observe from Fig.~\ref{fig:staticEnvLeafsBar} that it need not be the case that our framework utilizes the full communication bandwidth to send map information.
Instead, our framework works to resolve as many non-zero map innovation signal regions as possible by maintaining high resolution in those areas, while ensuring that other regions with zero map innovation maintain aggregated to reduce communication redundancy.
%

\subsection{Dynamic (time-changing) Environments}\label{subsec:results_DynEnv}

We will now consider the scenario of compressing a time-evolving environment $\{X_k\}_{k=1}^{N}$ due to the presence of a dynamic obstacle or object (e.g., a vehicle, human, or other agent), where the dynamic obstacle is assumed to be a circular amoeba that follows the path shown in Fig.~\ref{fig:fullEnvWithPath}.
The path followed by the amoeba is generated via application of Dijkstra's algorithm and employing the cost function discussed in~\cite{larsson2021information_b} with the parameters $\lambda_1 = 0$ and $\lambda_2 = 1$ and $\varepsilon = 0.5$.
It is important to note that the path shown in Fig.~\ref{fig:fullEnvWithPath} is not known to the sending agent, who will only know about the movement of the amoeba from its own observations.
The time-evolving nature of the environment is caused by the movement of the amoeba, which changes the occupancy of the cells it occupies.
%

\begin{figure}[t]
	\centering
	\subfloat[]{\label{fig:dynEnvDistortionBar}\includegraphics[width=0.49\columnwidth]{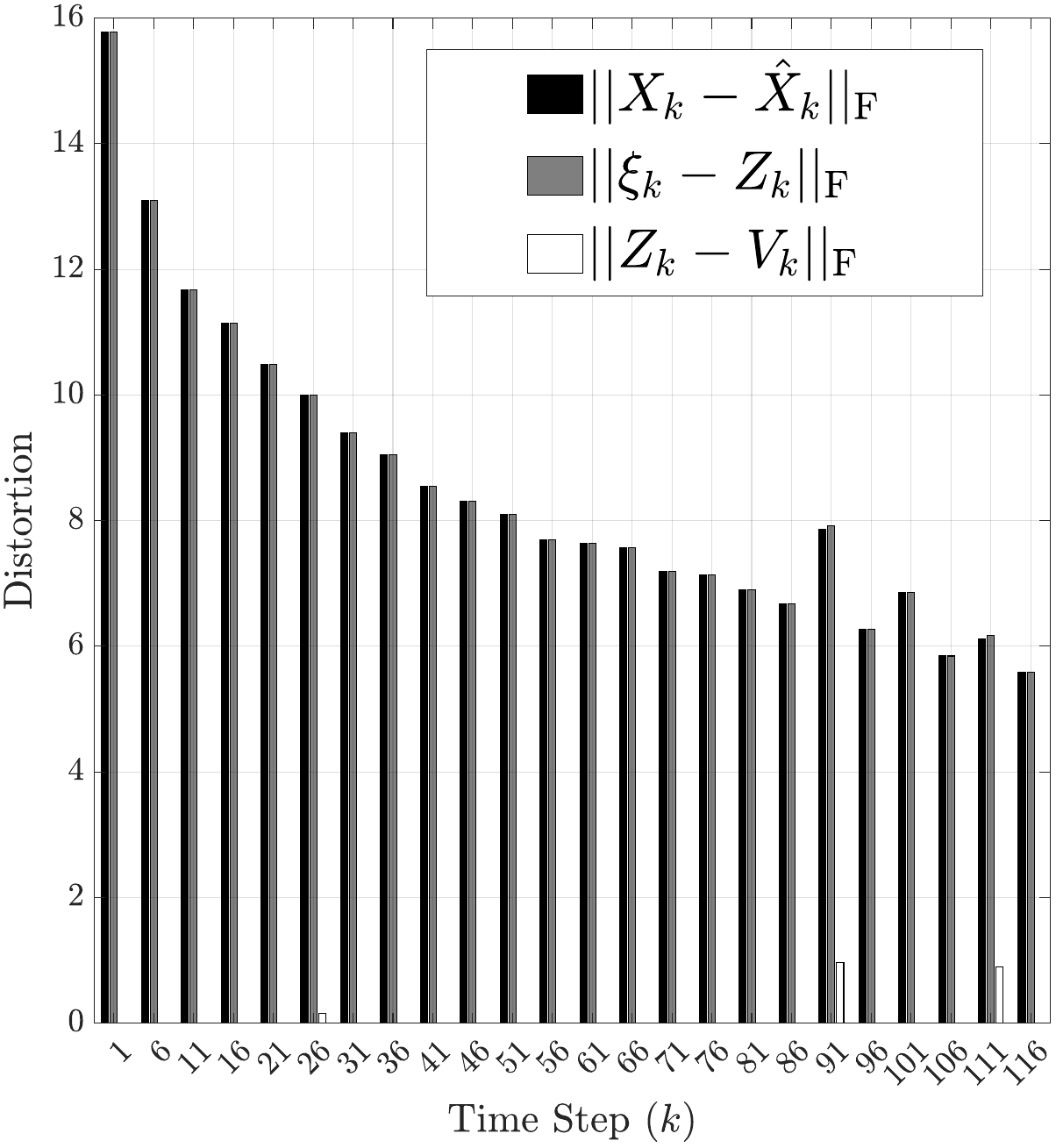}}
	\hfill
	\subfloat[]{\label{fig:dynEnvLeafsBar}\includegraphics[width=0.49\columnwidth]{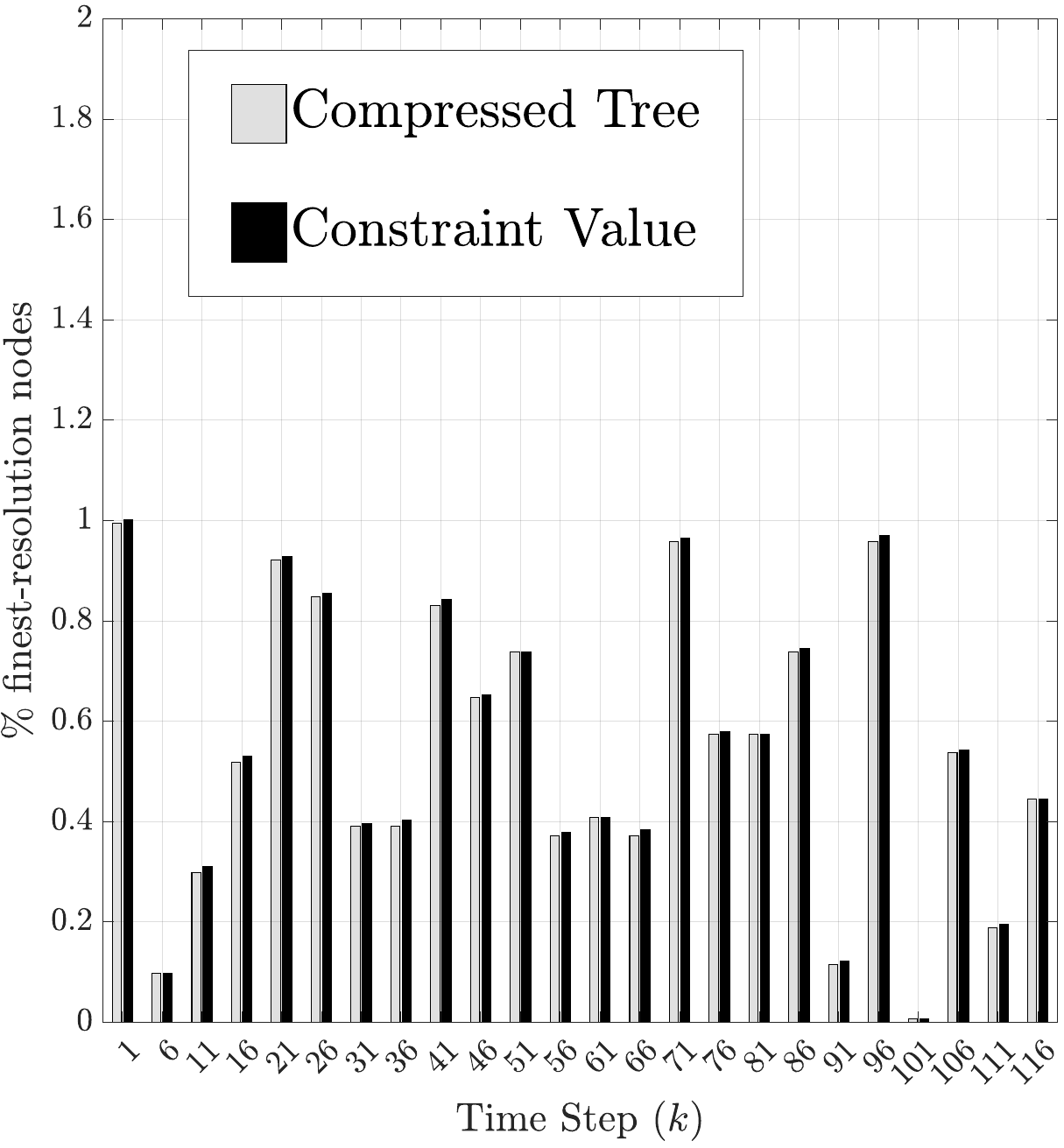}} 
	\caption{Distortion and number of leaf nodes (as percentage of finest-resolution map) vs. time-step for selected time steps. (a) Distortion vs. time-step. (b) Number of leaf nodes of the tree encoder solution vs. time-step with the constraint $b_k$ shown.}
	\label{fig:dynEnvDistortionLeafsBar}
\end{figure}

%
Application of our framework to the aforementioned scenario results with the distortion sequence shown in Fig~\ref{fig:dynEnvDistortionBar}.
Comparing with the results from Section~\ref{subsec:results_StaticEnv}, we observe from Fig.~\ref{fig:dynEnvDistortionBar} that, in contrast to Fig.~\ref{fig:staticEnvDistortionBar}, when the map is dynamic, the distortion $\lVert X_k - \hat X_k \rVert_{\Frob}$ is not monotone decreasing.
This occurs since in the case of a dynamic environment, the sending agent need not be faced with the same environment at each time-step, and thus monotone information transfer is not necessarily possible as regions may appear that increases the information-discrepancy between sender and receiver. 
Thus, some map regions may lack correct receiver information due to coarse updates, incorrect beliefs, or both.
%

\begin{figure}[tbh]
	\centering
	\subfloat[Time step $k=1$.]{\label{fig:dynEnvTrue1}\includegraphics[width=0.33\columnwidth]{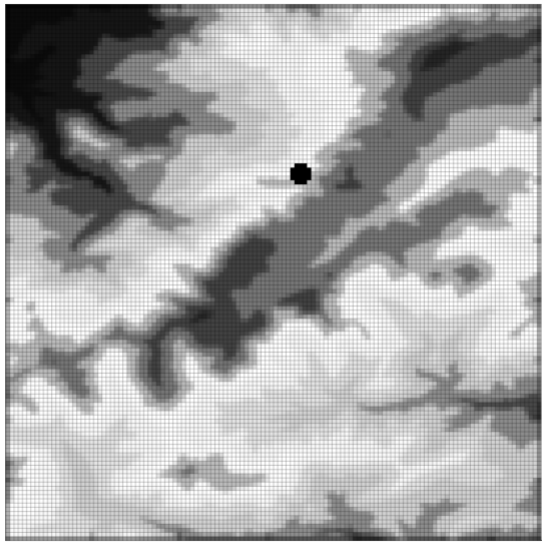}}
	\hfill
	\subfloat[Time step $k=91$.]{\label{fig:dynEnvTrue2.pdf}\includegraphics[width=0.33\columnwidth]{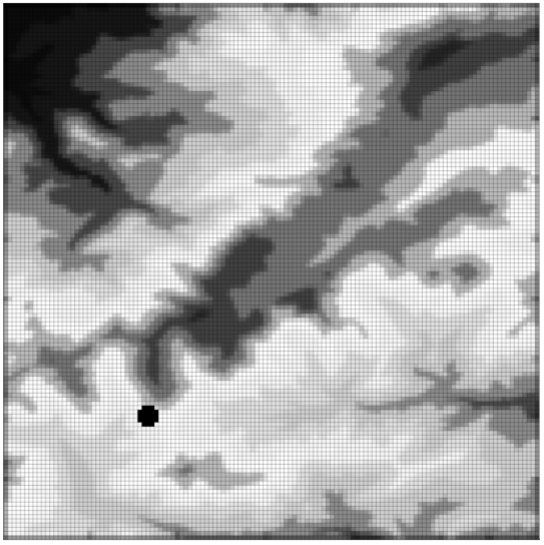}}
	\hfill
	\subfloat[Time step $k=116$.]{\label{fig:dynEnvTrue3}\includegraphics[width=0.33\columnwidth]{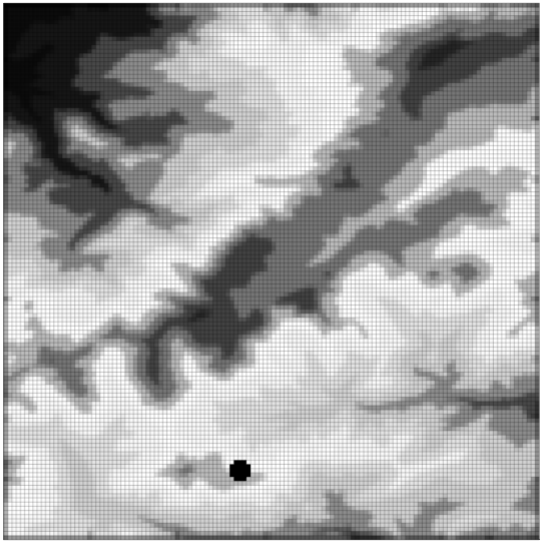}} \\
	\subfloat[Time step $k=1$.]{\label{fig:dynEnvEstimate1}\includegraphics[width=0.33\columnwidth]{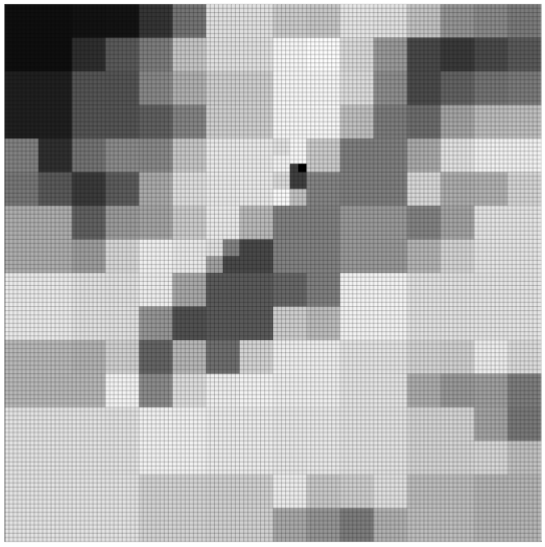}}
	\hfill
	\subfloat[Time step $k=91$.]{\label{fig:dynEnvEstimate2}\includegraphics[width=0.33\columnwidth]{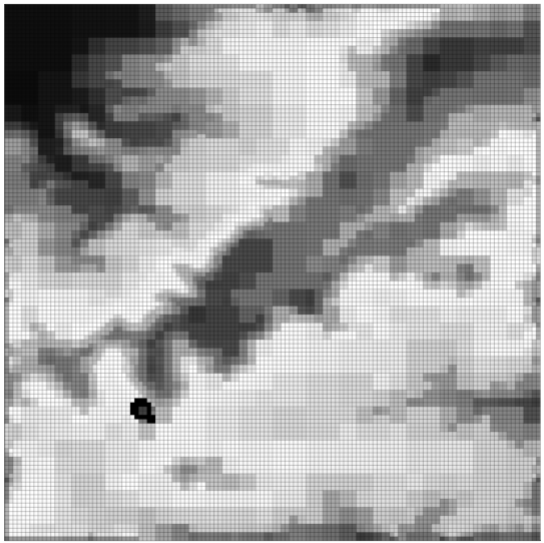}}
	\hfill
	\subfloat[Time step $k=116$.]{\label{fig:dynEnvEstimate3}\includegraphics[width=0.33\columnwidth]{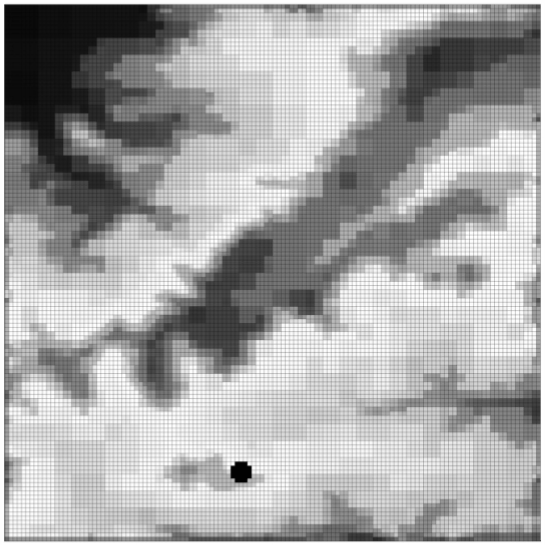}} 
	\caption{Sender-perceived map sequence $X_k$ and receiver-estimated map sequence $\hat X_k$ as a function of time step $k$. (a)-(c) Sender-perceived map $X_k$. (d)-(f) Receiver estimated map $\hat X_k$.}
	\label{fig:dynEnvEstimatedMapSequence}
\end{figure}

%
A sequence of map estimates obtained from our framework is shown in Fig.~\ref{fig:dynEnvEstimatedMapSequence}.
From Fig.~\ref{fig:dynEnvEstimatedMapSequence} we observe that the location of the amoeba is well-resolved in the map estimate sequence, despite the fact that the number of leaf nodes permitted at each time step is constrained to be no more than approximately $1\%$ of the total number of finest-resolution cells, as seen in Fig~\ref{fig:dynEnvLeafsBar}.
Moreover, we see from the sequence of maps in Figs.~\ref{fig:dynEnvEstimate1}-\ref{fig:dynEnvEstimate3} that the initial time-step our framework works to quantize the map information broadly, since after only one time-step the receiver holds a reasonably representative estimate of the map in Fig.~\ref{fig:dynEnvEstimate1}.
We also see from the sequence in Figs.~\ref{fig:dynEnvEstimate1}-\ref{fig:dynEnvEstimate3} that once our framework has communicated map information broadly, the approach works to provide good tracking resolution of the moving amoeba, resulting with the amoeba being clearly discernible at the final time-step.
The above observations are further corroborated by Fig.~\ref{fig:dynEnvinnovationSeqTemplate}, which shows the sequence of map innovation signals and selected encoder.
From Figs.~\ref{fig:dynEnvinnovationSignalOvrly2} and~\ref{fig:dynEnvinnovationSignalOvrly3} we see that the intensity of the map innovation signal is relatively small in regions outside of the moving amoeba, and thus the encoder selected at later time-steps is one that maintains very high resolution around only the regions of the map changing due to the moving amoeba.
%

\begin{figure}[tbh]
	\centering
	\subfloat[Time step $k=1$.]{\label{fig:dynEnvinnovationSignal1}\includegraphics[width=0.33\columnwidth]{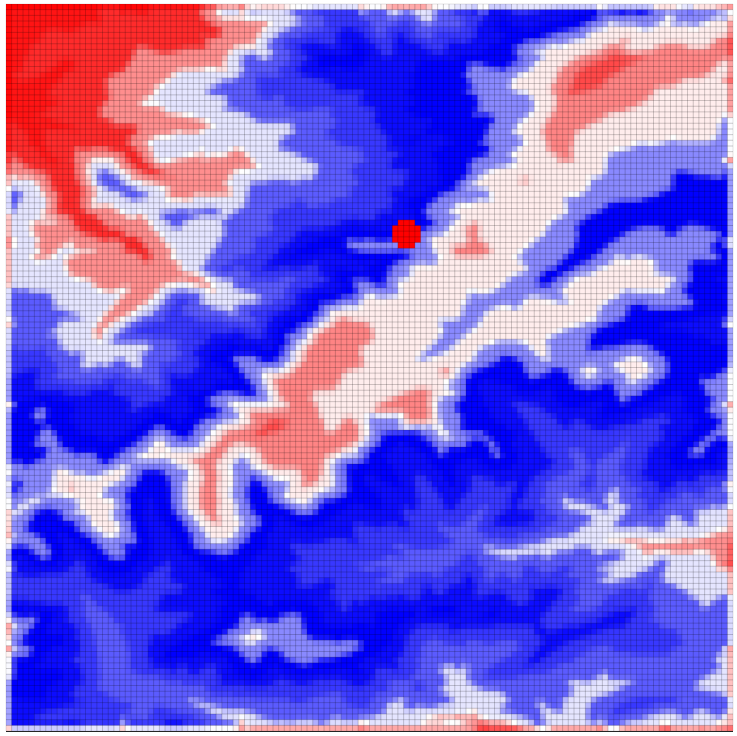}}
	\hfill
	\subfloat[Time step $k=91$.]{\label{fig:dynEnvinnovationSignal2}\includegraphics[width=0.33\columnwidth]{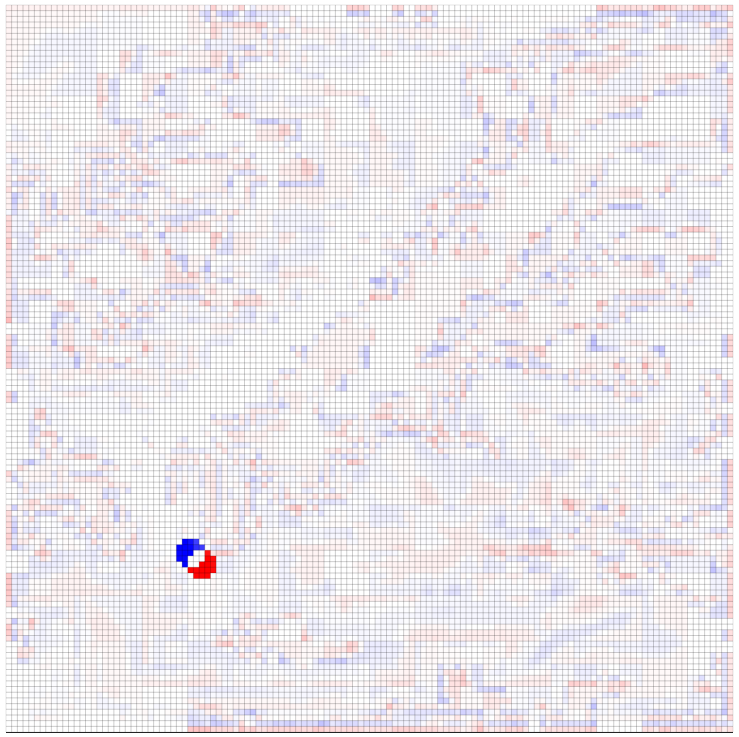}}
	\hfill
	\subfloat[Time step $k=116$.]{\label{fig:dynEnvinnovationSignal3}\includegraphics[width=0.33\columnwidth]{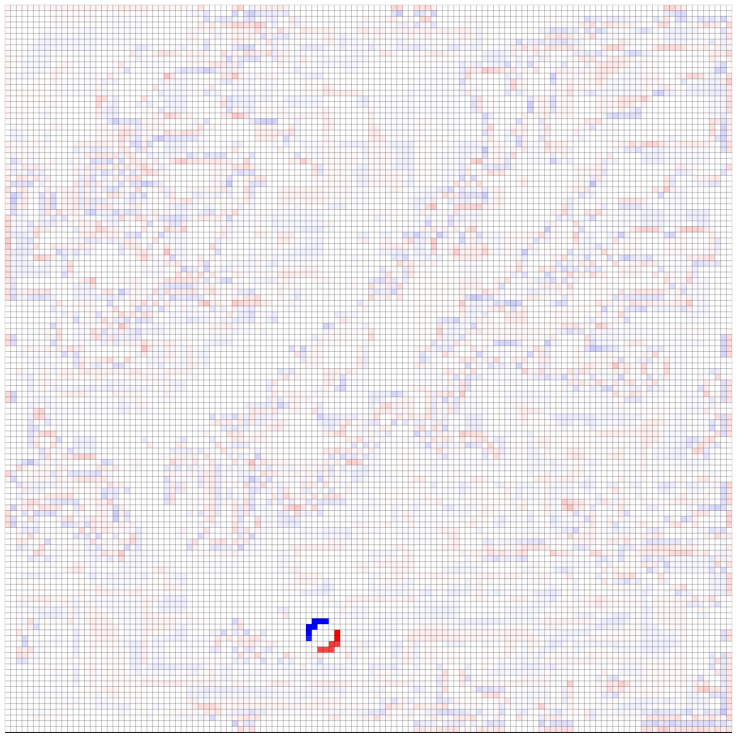}} \\
	%
	\subfloat[Time step $k=1$.]{\label{fig:dynEnvinnovationSignalOvrly1}\includegraphics[width=0.33\columnwidth]{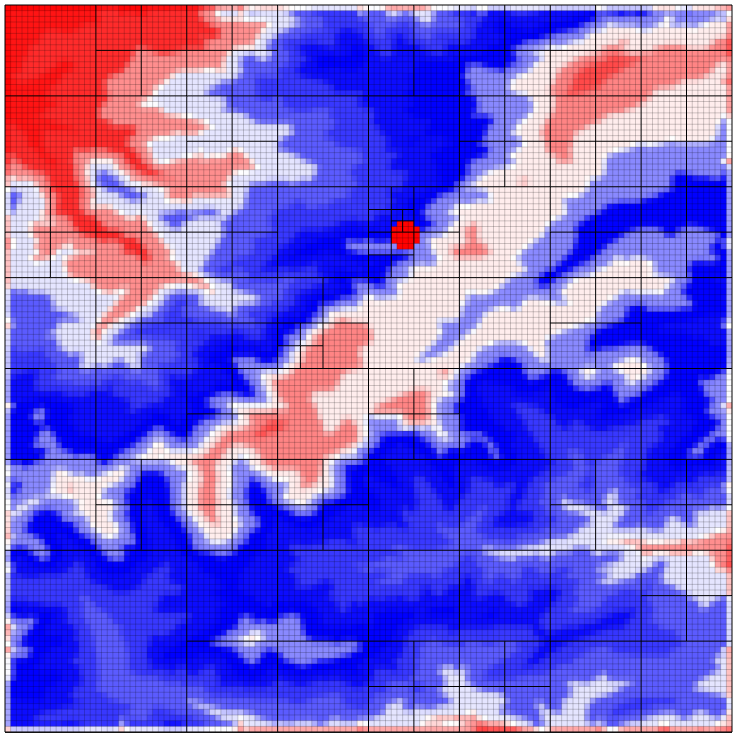}}
	\hfill
	\subfloat[Time step $k=91$.]{\label{fig:dynEnvinnovationSignalOvrly2}\includegraphics[width=0.33\columnwidth]{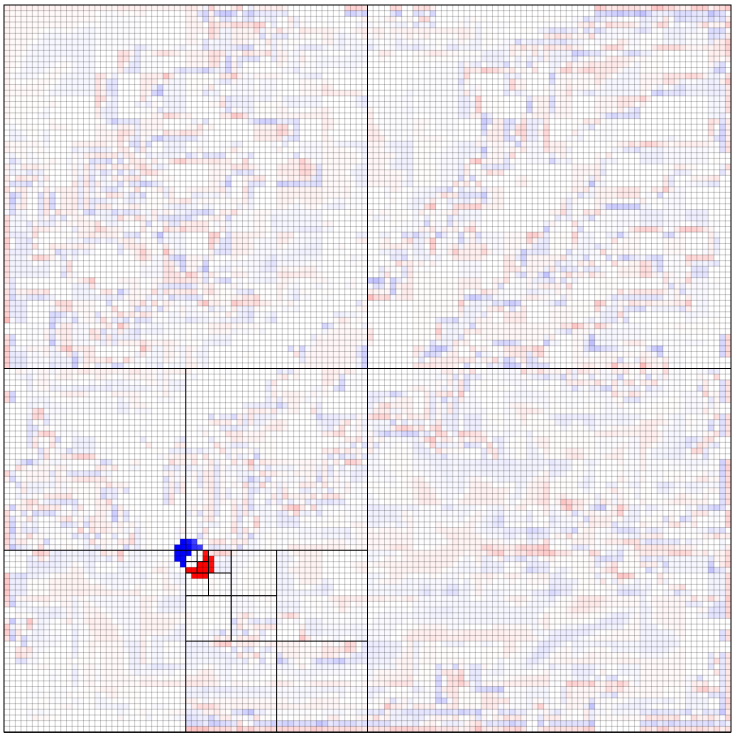}}
	\hfill
	\subfloat[Time step $k=116$.]{\label{fig:dynEnvinnovationSignalOvrly3}\includegraphics[width=0.33\columnwidth]{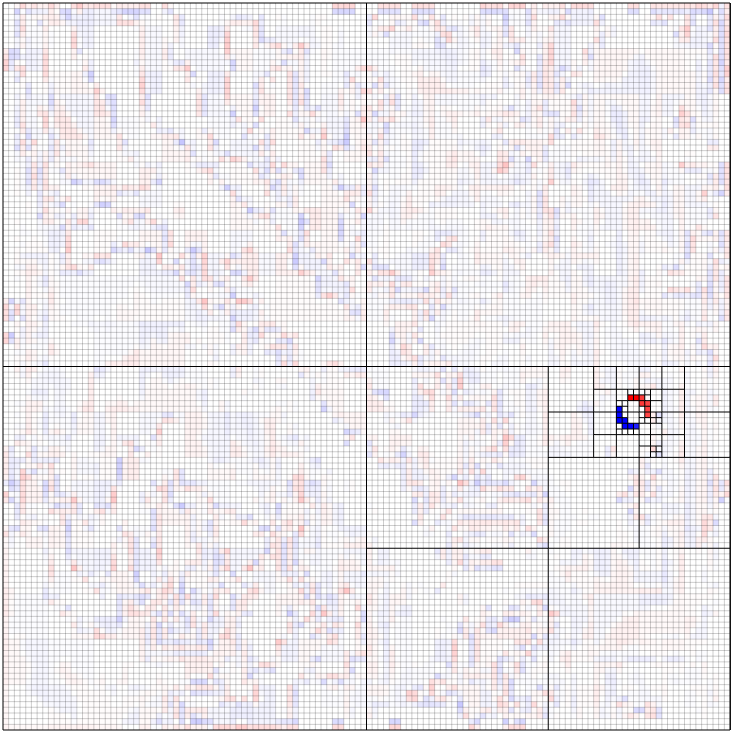}} 
	\caption{Map innovation signals together with the selected hierarchical tree encoder. (a)-(c) map innovation signal $\xi_k$. (d)-(f) The map innovation signal in (a)-(c) with the tree encoder overlay. Color corresponds to innovation signal sign and magnitude: red is a cell with positive innovation signal, blue is a cell with negative innovation signal, and white is a cell with an innovation signal of zero. Intensity of corresponding color indicates magnitude of signal strength.}
	\label{fig:dynEnvinnovationSeqTemplate}
\end{figure}

\section{Conclusions}\label{sec:conclusion}

In this paper, we developed a framework for occupancy grid compression in dynamic environments. 
Our approach leveraged concepts from signal compression theory to formulate a map compression problem that aims to minimize the distortion between the true and the receiver-reconstructed occupancy map as a function of time while satisfying communication-bandwidth constraints. 
To solve our problem, we introduce a \emph{map innovation signal}, which quantifies the information discrepancy between sending and receiving agents, and constrain the signal encoder to correspond to a hierarchical multi-resolution (quad)tree.
We showcase the utility of our framework by compressing large probabilistic occupancy grids in both static (non-time varying) and dynamic (time-varying) cases.
Our results show that a significant degree of map compression can be accomplished while relaying large amounts of map information.

\bibliographystyle{IEEEtran}


\begin{thebibliography}{10}
	\providecommand{\url}[1]{#1}
	\csname url@samestyle\endcsname
	\providecommand{\newblock}{\relax}
	\providecommand{\bibinfo}[2]{#2}
	\providecommand{\BIBentrySTDinterwordspacing}{\spaceskip=0pt\relax}
	\providecommand{\BIBentryALTinterwordstretchfactor}{4}
	\providecommand{\BIBentryALTinterwordspacing}{\spaceskip=\fontdimen2\font plus
		\BIBentryALTinterwordstretchfactor\fontdimen3\font minus
		\fontdimen4\font\relax}
	\providecommand{\BIBforeignlanguage}[2]{{%
			\expandafter\ifx\csname l@#1\endcsname\relax
			\typeout{** WARNING: IEEEtran.bst: No hyphenation pattern has been}%
			\typeout{** loaded for the language `#1'. Using the pattern for}%
			\typeout{** the default language instead.}%
			\else
			\language=\csname l@#1\endcsname
			\fi
			#2}}
	\providecommand{\BIBdecl}{\relax}
	\BIBdecl
	
	\bibitem{psomiadis2024communication}
	E.~Psomiadis, D.~Maity, and P.~Tsiotras, ``Communication-aware map compression
	for online path-planning,'' in \emph{2024 IEEE International Conference on
		Robotics and Automation (ICRA)}.\hskip 1em plus 0.5em minus 0.4em\relax IEEE,
	2024, pp. 12\,368--12\,374.
	
	\bibitem{Cowlagi2012}
	R.~V. Cowlagi and P.~Tsiotras, ``Multiresolution motion planning for autonomous
	agents via wavelet-based cell decompositions,'' \emph{IEEE Transactions on
		Systems, Man, and Cybernetics, Part B (Cybernetics)}, vol.~42, no.~5, pp.
	1455--1469, 2012.
	
	\bibitem{Cowlagi2008}
	------, ``Multiresolution path planning with wavelets: A local replanning
	approach,'' in \emph{American Control Conference}, 2008, pp. 1220--1225.
	
	\bibitem{Cowlagi2007}
	------, ``Beyond quadtrees: Cell decompositions for path planning using wavelet
	transforms,'' in \emph{IEEE Conference on Decision and Control}, 2007, pp.
	1392--1397.
	
	\bibitem{kraetzschmar2004probabilistic}
	G.~K. Kraetzschmar, G.~P. Gassull, and K.~Uhl, ``Probabilistic quadtrees for
	variable-resolution mapping of large environments,'' \emph{IFAC Proceedings
		Volumes}, vol.~37, no.~8, pp. 675--680, 2004.
	
	\bibitem{hauer2015multi}
	F.~Hauer, A.~Kundu, J.~M. Rehg, and P.~Tsiotras, ``Multi-scale perception and
	path planning on probabilistic obstacle maps,'' in \emph{2015 IEEE
		International Conference on Robotics and Automation (ICRA)}.\hskip 1em plus
	0.5em minus 0.4em\relax IEEE, 2015, pp. 4210--4215.
	
	\bibitem{hauer2016reduced}
	F.~Hauer and P.~Tsiotras, ``Reduced complexity multi-scale path-planning on
	probabilistic maps,'' in \emph{2016 IEEE International Conference on Robotics
		and Automation (ICRA)}.\hskip 1em plus 0.5em minus 0.4em\relax IEEE, 2016,
	pp. 83--88.
	
	\bibitem{Tsiotras2007}
	P.~Tsiotras and E.~Bakolas, ``A hierarchical on-line path planning scheme using
	wavelets,'' in \emph{2007 European Control Conference (ECC)}, 2007, pp.
	2806--2812.
	
	\bibitem{tsiotras_multiresolution_2012}
	\BIBentryALTinterwordspacing
	P.~Tsiotras, D.~Jung, and E.~Bakolas, ``Multiresolution {Hierarchical}
	{Path}-{Planning} for {Small} {UAVs} {Using} {Wavelet} {Decompositions},''
	\emph{Journal of Intelligent \& Robotic Systems}, vol.~66, no.~4, pp.
	505--522, Jun. 2012. [Online]. Available:
	\url{https://doi.org/10.1007/s10846-011-9631-z}
	\BIBentrySTDinterwordspacing
	
	\bibitem{bakolas2008multiresolution}
	E.~Bakolas and P.~Tsiotras, ``Multiresolution path planning via sector
	decompositions compatible to on-board sensor data,'' in \emph{AIAA Guidance,
		Navigation and Control Conference and Exhibit}, 2008, p. 7238.
	
	\bibitem{larsson2020q}
	D.~T. Larsson, D.~Maity, and P.~Tsiotras, ``Q-tree search: An
	information-theoretic approach toward hierarchical abstractions for agents
	with computational limitations,'' \emph{IEEE Transactions on Robotics},
	vol.~36, no.~6, pp. 1669--1685, 2020.
	
	\bibitem{tishby2000IBmethod}
	N.~Tishby, F.~C. Pereira, and W.~Bialek, ``The information bottleneck method,''
	in \emph{The 37th annual Allerton Conference on Communication}, September
	1999, p. 368–377.
	
	\bibitem{thomas2006elements}
	M.~Thomas and A.~T. Joy, \emph{Elements of information theory}.\hskip 1em plus
	0.5em minus 0.4em\relax Wiley-Interscience, 2006.
	
	\bibitem{larsson2021information}
	D.~T. Larsson, D.~Maity, and P.~Tsiotras, ``Information-theoretic abstractions
	for resource-constrained agents via mixed-integer linear programming,'' in
	\emph{Proceedings of the Workshop on Computation-Aware Algorithmic Design for
		Cyber-Physical Systems}, 2021, pp. 1--6.
	
	\bibitem{larsson2023linear}
	------, ``A linear programming approach for resource-aware
	information-theoretic tree abstractions,'' in \emph{Computation-Aware
		Algorithmic Design for Cyber-Physical Systems}.\hskip 1em plus 0.5em minus
	0.4em\relax Springer, 2023, pp. 101--138.
	
	\bibitem{larsson2022generalized}
	------, ``A generalized information-theoretic framework for the emergence of
	hierarchical abstractions in resource-limited systems,'' \emph{Entropy},
	vol.~24, no.~6, p. 809, 2022.
	
	\bibitem{Nelson2015}
	E.~Nelson and N.~Michael, ``Information-theoretic occupancy grid compression
	for high-speed information-based exploration,'' in \emph{2015 IEEE/RSJ
		International Conference on Intelligent Robots and Systems (IROS)}, 2015, pp.
	4976--4982.
	
	\bibitem{larsson2021information_b}
	D.~T. Larsson, D.~Maity, and P.~Tsiotras, ``Information-theoretic abstractions
	for planning in agents with computational constraints,'' \emph{IEEE Robotics
		and Automation Letters}, vol.~6, no.~4, pp. 7651--7658, 2021.
	
	\bibitem{psomiadis2024multi}
	E.~Psomiadis, D.~Maity, and P.~Tsiotras, ``Multi-agent task-driven exploration
	via intelligent map compression and sharing,'' \emph{arXiv preprint
		arXiv:2403.14780}, 2024.
	
	\bibitem{borkar1997lqg}
	V.~S. Borkar and S.~K. Mitter, ``Lqg control with communication constraints,''
	in \emph{Communications, computation, control, and signal processing: a
		tribute to Thomas Kailath}.\hskip 1em plus 0.5em minus 0.4em\relax Springer,
	1997, pp. 365--373.
	
	\bibitem{yuksel2019note}
	S.~Y\"{u}ksel, ``A note on the separation of optimal quantization and control
	policies in networked control,'' \emph{SIAM Journal on Control and
		Optimization}, vol.~57, no.~1, pp. 773--782, 2019.
	
	\bibitem{maity2023optimal}
	D.~Maity and P.~Tsiotras, ``Optimal quantizer scheduling and controller
	synthesis for partially observable linear systems,'' \emph{SIAM Journal on
		Control and Optimization}, vol.~61, no.~4, pp. 2682--2707, 2023.
	
\end{thebibliography}


\appendix
For the simplicity of the presentation we suppress the time indices.
Let $\hat X^- $ denote the  map estimate of the previous time. Then,
\begin{align*}
	\lVert X - f_d(f_e(X)) \rVert_F & =\lVert X  - \hat X^- + \hat X^- - f_d(f_e(X))  \rVert_F, \\
    & = \lVert \xi + \hat X^- - f_d(f_e(X)) \rVert_F,
\end{align*}
and thus~\eqref{eq:mainOrigObjectiveFunc}-\eqref{eq:mainOrigConstraint2} is equivalent to the problem
\begin{equation}\label{eq:encRexpressOPTIntermediate1}
	\min_{f_e,\bar f_d} ~ \lVert \xi - \bar f_d(f_e(X)) \rVert_F,
\end{equation}
\begin{align}
   \text{subject to} \hspace{2 cm} B(f_e(X)) &\leq b, \hspace{1cm}\\
	0 \leq \hat X^- + \bar f_d(f_e(X)) &\leq 1, \label{eq:modifiedConstraint2} 
\end{align}
where $\bar f_d(f_e(X)) = f_d(f_e(X)) - \hat X^-$ is the function $f_d(f_e(X))$ shifted by the previous map estimate.

Now, note that the innovation map $\xi$ is available at the encoder. 
Therefore, for any encoding function $g_e$ that operates on the innovation map, we may upper-bound the objective function \eqref{eq:encRexpressOPTIntermediate1}~as 
\begin{align*}
    \lVert \xi - \bar f_d(f_e(X)) \rVert_F \le \lVert \xi - g_e(\xi) \rVert_F + \lVert g_e(\xi) - \bar f_d(f_e(X)) \rVert_F,
\end{align*}
where $\lVert \xi - g_e(\xi) \rVert_F$ denotes the \textit{loss} due to encoding and $\lVert g_e(\xi) - \bar f_d(f_e(X)) \rVert_F$ denotes the loss from decoding. 
Consequently,
\begin{align*}
    \min_{f_e,\bar f_d}\, \lVert \xi -\! \bar f_d(f_e(X)) \rVert_F \le 
    \min_{f_e,\bar f_d, g_e} \, \lVert \xi - g_e(\xi) \rVert_F + \lVert g_e(\xi) -\! \bar f_d(f_e(X)) \rVert_F.
\end{align*}
Likewise, for any decoder $g_d$ that operates on the encoded innovation map, we may further upper bound the objective according to
\begin{align*}
    \min_{f_e,\bar f_d}\, \lVert \xi -\! \bar f_d(f_e(X)) \rVert_F \le 
    \min_{f_e,\bar f_d, g_e, g_d} \!\! &\lVert \xi - g_e(\xi) \rVert_F + \lVert g_e(\xi) -  g_d(g_e(\xi))\rVert_F \\
    &~~~~~ +\lVert g_d(g_e(\xi)) - \bar f_d(f_e(X)) \rVert_F.
\end{align*}
The choice $f_e(X) = g_e(\xi)$ and $\bar f_d (\cdot) = g_d(\cdot)$ minimizes the upper bound by making the last term zero.
Subsequently, by plugging in these choices for $f_e$ and $\bar f_d$, we arrive at the following upper bound minimization problem: 
\begin{align*}
    \min_{g_e, g_d}\,\lVert \xi - g_e(\xi) \rVert_F + \lVert g_e(\xi) -  g_d(g_e(\xi))\rVert_F
\end{align*}
\begin{align*}
   \text{subject to}\hspace{2cm} B(g_e(\xi)) &\leq b,\\
	0 \leq \hat X^- + g_d(g_e(X)) &\leq 1,
\end{align*}
which completes the proof. \qed
\end{document}